\renewcommand\cite{\citep}
\theoremstyle{plain}
\newtheorem{proposition}{Proposition}
\newtheorem{corollary}{Corollary}
\newtheorem{remark}{Remark}
\newtheorem{theorem}{Theorem}
\theoremstyle{definition}
\newtheorem{definition}{Definition}
\newcommand{\power}{\mathit{pow}}
\newcommand{\uniform}{\mathit{Uniform}}
\begin{document}

% If your paper is accepted and the title of your paper is very long,
% the style will print as headings an error message. Use the following
% command to supply a shorter title of your paper so that it can be
% used as headings.
%
%\runningtitle{I use this title instead because the last one was very long}
\runningtitle{Pareto GAN}

% If your paper is accepted and the number of authors is large, the
% style will print as headings an error message. Use the following
% command to supply a shorter version of the authors names so that
% they can be used as headings (for example, use only the surnames)
%
%\runningauthor{Surname 1, Surname 2, Surname 3, ...., Surname n}
\runningauthor{Huster, Cohen, Lin, Chan, Kamhoua, Leslie, Chiang, Sekar}

\twocolumn[

\aistatstitle{Pareto GAN: Extending the Representational Power of GANs to Heavy-Tailed Distributions}
%\aistatsauthor{ Todd Huster \And Jeremy E.J. Cohen \And Zinan Lin \And Kevin Chan  \And Charles Kamhoua \And Nandi Leslie \And Cho-Yu Jason Chiang \And Vyas Sekar }
%\aistatsaddress{ Perspecta Labs \And Perspecta Labs \And Carnegie Mellon University \And Army Research Lab  \And Army Research Lab \And Raytheon Technologies \And Perspecta Labs \And Carnegie Mellon University} ]

\aistatsauthor{ Todd Huster \And Jeremy E.J. Cohen \And Zinan Lin }
\aistatsaddress{ Perspecta Labs \And Perspecta Labs \And Carnegie Mellon University}
\aistatsauthor{ Kevin Chan  \And Charles Kamhoua \And Nandi Leslie}
\aistatsaddress{ Army Research Lab  \And Army Research Lab \And Raytheon Technologies}
\aistatsauthor{ Cho-Yu Jason Chiang \And Vyas Sekar}
\aistatsaddress{ Perspecta Labs \And Carnegie Mellon University}
]

\begin{abstract}

Generative adversarial networks (GANs) are often billed as "universal distribution learners", but precisely what distributions they can represent and learn is still an open question.
Heavy-tailed distributions are prevalent in many different domains such as financial risk-assessment, physics, and epidemiology.
We observe that existing GAN architectures do a poor job of matching the asymptotic behavior of heavy-tailed distributions, a problem that we show stems from their construction.
Additionally, when faced with the infinite moments and large distances between outlier points that are characteristic of heavy-tailed distributions, common loss functions produce unstable or near-zero gradients.
We address these problems with the Pareto GAN. 
A Pareto GAN leverages extreme value theory and the functional properties of neural networks to learn a distribution that matches the asymptotic behavior of the marginal distributions of the features.
We identify issues with standard loss functions and propose the use of alternative metric spaces that enable stable and efficient learning.
Finally, we evaluate our proposed approach on a variety of heavy-tailed datasets.

%In principle, GANs can generate an arbitrary set of points, regardless of the distribution they are drawn from, but their ability to appropriately generalize from training samples is what separates them from a look-up table.
%However, we show that exiting GAN architectures do a poor job of matching the asymptotic behavior of heavy-tailed distributions. 

%Furthermore, heavy-tailed distributions have unique properties, such as infinite moments, which make stable learning difficult. 

%They have limitations in their ability to both represent and learn these functions.
%A common practice when dealing with heavy-tailed distributions is performing a logarithmic transform of the data, and training GAN on the lighter-tailed transformed data.
%However, little to no analysis has been done on how this impacts either the learning process or the representational power of the generator.
%In this paper, we consider ways of training GANs on heavy-tailed distributions directly.
%We identify specific criteria that the GAN generator and loss function must meet in order to learn these distributions.
%We show that common GAN architectures fail to meet these criteria and propose solutions to these shortcomings.
%We also compare variants of our solutions to one another.  

\end{abstract}
\section{INTRODUCTION}

Heavy-tailed, and particularly power-law, distributions are regularly encountered in a diverse set of applications such as spectroscopy, particle motion, finance, geological processes, epidemiology, etc. \cite{michel07, fortin15, gilli06, caers99, evans04}.
Analysis of these distributions is often focused on the prevalence (i.e., risk) of rare events.
Models that can fit sample data while accurately predicting the probabilities of extreme events are useful for risk assessment in these fields.

How suitable are GANs to serve as these models? 
They have proven to be wildly successful at learning complex distributions in the image domain, \emph{without simply memorizing the data} \cite{brock18}.
They can famously generate convincing images of the faces of non-existent celebrities \cite{karras17}.
Can they also convincingly generate samples of the default rates of mortgages and the features of 100-year floods?

The universal approximation theorem \cite{hornik89} and effective loss functions like Wasserstein distance \cite{arjovsky17} suggest that GANs can learn to generate an arbitrary dataset, regardless of the distribution it was drawn from.
Of course, simple bootstrap sampling from the dataset can do the same.
It is the ability of GANs (or any generative model) to appropriately generalize from training examples that separates them from a static dataset.

In the case of heavy-tailed distributions, this generalization is not effective.
As we will show in section \ref{sec:tails}, the asymptotic behavior of a GAN marginals is predictable based solely on the combination of input distribution and activation function, irrespective of the training data.
In most cases, the generator is able to fit the training data closely and \emph{interpolate} between these points, but it does not \emph{extrapolate} in a reasonable fashion.

This does not have to be the case, however.
In their extremes, most naturally occurring marginal distributions follow one of a handful of asymptotic behaviors \cite{balkema74}.
These behaviors are all captured by the generalized Pareto distribution.
Since we are able to \emph{predict} the asymptotic behavior of a GAN from its architecture, we can therefore also \emph{design} our GAN to take on a particular belief about the tail behavior of its marginals.
To create such a generator, we feed a standard neural network a noise function with heavy-tailed characteristics and provide a few mechanisms for controlling how heavy the tails should be.

Given such a generator, we must be able to find a reliable gradient to train it.
Heavy-tailed distributions introduce challenges to learning. 
In section \ref{sec:loss}, we show that common metrics, such as Wasserstein distance \cite{arjovsky17} and energy distance \cite{bellemare17} are infinite between sufficiently heavy-tailed distributions.
In these cases, sample gradients do not converge and mini-batch gradient estimates are unstable. 
We propose a solution to this problem where we evaluate the loss function over a metric space on which the distributions are better behaved.

We name the combined approach Pareto GAN.
Pareto GAN uses methods from extreme value theory to estimate the tail index of the marginal input distributions.
It uses this tail index to construct a generator with matching tails and a loss metric that ensures a useful gradient for training. 
We show how Pareto GAN can be used to generate multivariate distributions that have different marginal tail indexes, which suggests a high degree of flexibility in future applications.

%Our main contributions are
%\begin{enumerate}
%  \item Predicting and demonstrating the asymptotic behavior of a number of common GAN architectures. 
%  \item A principled but simple method for incorporating tail beliefs into GAN design. 
%  \item A method for producing useful training gradients for learning heavy-tailed distributions by considering alternate metric spaces.
%\end{enumerate}

%The remainder of this paper is broken up as follows. Section 2 reviews several topics relevant to the remainder of the paper. Section 3 analyzes the tail distribution of several classes of GAN generators and introduces the Pareto GAN. Section 4 examines the convergence of various loss GAN functions and proposes a way to get useful training gradients. Section 5 gives experimental results, and we conclude in section 6.  

\section{PRELIMINARIES}
\subsection{Generative adversarial networks}
A GAN consists of a generator and a discriminative loss function. The generator is represented as a neural network $f$ that transforms a random variable $Z$ with a known distribution (e.g., uniform, normal) into a new random variable in some output space:

\begin{equation}
    X = f(Z)
\end{equation}

The generator network $f$ is trained to minimize a loss function that discriminates between samples from two distributions. 
The ideal training outcome is that $X$ matches a particular target distribution over the output space and the loss function cannot discriminate between the two distributions.
Popular GAN loss functions include Wasserstein distance \cite{arjovsky17} and maximum mean discrepancy (MMD) \cite{gretton12}\cite{li2017mmd}. 
This paper focuses on Wasserstein distance and energy distance \cite{sejdinovic13}, which is a type of MMD loss function (see section \ref{sec:loss}). 

\subsection{Tail distributions and extreme value theory}

%\subsection{Heavy tailed distributions}

%Most GAN literature is focused on bounded domains like images, but we focus on distributions over unbounded domains, such as the real line. 
%One of the main results in extreme value theory is that such distributions almost invariably follow the asymptotic behavior of the generalized Pareto distribution (GPD). 

%\begin{definition}\label{def:ccdf}
%For random variable X, the complementary cumulative distribution function (CCDF) is defined 
%\begin{equation}
%\bar F(x) = 1-F(x) = P(X > x)
% \end{equation}
% 
%where $F(x)$ is the cumulative distribution function of $X$.
%\end{definition}

\begin{definition}\label{def:excess}
    Let $X$ be a random variable. Define $F(x) = P(X \le x)$ as the cumulative distribution function (CDF) of X.  Define $\bar F(x) = 1-F(x)$ as the complementary cumulative distribution function (CCDF). For random variable X, the conditional excess distribution function is defined 

    \begin{equation}
        \begin{split}
            F_u(y) &= P(X-u \leq y | X>u) \\
            &= \frac{F(u+y)- F(u)}{1 - F(u)}
            %F_u(y) = P(X-u \leq y | X>u) = \frac{\bar F(u)-\bar F(u+y)}{\bar F(u)}
        \end{split}
    \end{equation}
\end{definition}

\begin{definition}\label{def:gpd}
    The generalized Pareto distribution (GPD), parameterized by tail index $\xi \in \mathbb{R}$ and scaling parameter $\sigma \in \mathbb{R}$, has the following CCDF, which is defined over $\mathbb{R}_+$:

    \begin{equation}
        S(z; \xi,\sigma) = \begin{cases}
            (1 + \xi z/\sigma)^{-\frac{1}{\xi}},      & \text{for $\xi \neq 0$} \\
            e^{-z/\sigma},                            & \text{for $\xi = 0$}.
        \end{cases}
    \end{equation}
    %For notational convenience, we drop the $\sigma$ parameter when $\sigma=1$.
\end{definition}

%\begin{definition}
%    The generalized Pareto distribution (GPD) with shape parameter $\xi \in \mathbb{R}$ has the following PDF which is defined over $\mathbb{R}_+$:

%    \begin{equation}
%        g(z) = \begin{cases}
%            (1 + \xi z)^{-\frac{\xi + 1}{\xi}},      & \text{for $\xi \neq 0$} \\
%            e^{-z},                                  & \text{for $\xi = 0$}
%        \end{cases}
%    \end{equation}
%    
%    where if $\xi < 0$, $g(z) = 0\,\, \forall z \notin [0, -1 / \xi]$.
%\end{definition}
%\begin{definition}
%We say that the tail of a distribution F converges to G if 
%\begin{equation}
%P(X-u \le z | X>u) \rightarrow G(z) \text{ as } u \rightarrow \infty
%\end{equation}
%\end{definition}
The Pickands–Balkema–de Haan theorem \cite{balkema74} states that the conditional excess of a broad class of distributions converge to the GPD as $u \rightarrow \infty$. These distributions include bounded distributions, exponential family distributions (e.g., Gaussian, Laplacian), stable distributions (e.g., Cauchy, Levy), and power law distributions (Student-t, Pareto).  
There are a variety of definitions in the literature for what constitutes a "heavy-tailed" distribution, but we will use the term to denote distributions with a tail index $\xi>0$.

%but the Pickands–Balkema–de Haan theorem in extreme value theory shows that the asymptotic behavior of a broad class of distributions converges to the generalized Pareto distribution (GPD) \cite{balkema74}. These distributions span bounded distributions, exponential distributions, including the Gaussian and Laplacian, and heavy-tailed distributions like Cauchy, Student-t, and Levy distributions. 

%There are different definitions in the literature of what constitutes a "heavy-tailed" distribution, but in this paper we will focus on distributions with a tail index above 0. These distributions all have some infinite or undefined power moments. 

%From this definition it's clear that when $\xi = 0$ the distribution corresponds to the exponential distribution and thus does not have a heavy tail.
%For $\xi < 0$, by definition, the PDF has finite support, and thus does not have heavy tails.
%However, for $\xi > 0$, as $z \rightarrow \infty$, $g(z) \approx z^{-(1 + \frac{1}{\xi})}$.
%For example, if $\xi = 1$, $g(z) \approx \frac{1}{z^2}$ which defines a distribution with infinite variance, and thus a heavy tail.
%In fact, the heaviness of the tail of a GPD is entirely determined by the parameter $\xi$, where larger $\xi$ correspond to heavier tails.

%\subsection{Lipschitz continuity}

%\subsection{GANs and random variables}

\section{RELATED WORK}

Works using GANs on heavy-tailed data \cite{lin2019generating}\cite{Wiese2019QuantGD} often train on logarithmically transformed data, and exponentiate the GAN output to get back to the original data domain. 
While this can help the learning process, the learned distribution does not meet our definition of heavy tailed, as we will show in section \ref{sec:tails}. 
Other works have used heavy tailed input distributions on bounded domains (e.g. images) \cite{Sun2018StudentsTA}\cite{Upadhyay2019RobustSG}. 
These works focus on representations with non-Gaussian characteristics, but are not concerned with the tails of the output domain (since it is bounded). 
\cite{Wiese2019Copula} presents a proof that a generator network cannot make the tails of its input distribution heavier. 
Our work mirrors some of the arguments in \cite{Wiese2019Copula}, but presents a viable solution to the problem in the Pareto GAN.
Concurrent work \cite{feder2020nonlinear} uses a Student-t prior to produce unbounded heavy-tailed data, which has similar tail characteristics to our GPD prior. 
However, their choices for tail index and loss function were chosen through trial and error. 
We present an approach for choosing these parameters grounded in theory and existing extreme value literature.

\section{THE ASYMPTOTIC BEHAVIOR OF GAN GENERATORS}\label{sec:tails}
We now examine the asymptotic behavior of GAN generators. To do so, we draw heavily on a property of most neural networks (including all those which we will consider): Lipschitz continuity. Roughly speaking, a Lipschitz continuous function has bounded slope; for brevity, we place a full definition of Lipschitz continuity in the appendix.

%\begin{wrapfigure}{R}{0.5\textwidth}
%    \centering
%    \includegraphics[width=0.5\textwidth]{fig1.eps}
%    \caption{Comparison bounds provided by \newline different certification methods on 2-D spiral data.}
%    \label{fig:spiral_plots}
%\end{wrapfigure}

\subsection{Generators with bounded support}
%We now consider a generator with a bounded input distribution.
%(NOTES: Do we want to discuss the universal approximation theorem here? Explain more about relationship between univariate and multi-variate cases) 

\begin{proposition}\label{prop:bounded}
    Let $Z_A$ be a random variable in metric space $(\mathcal{Z},d_\mathcal{Z})$. Let $f: \mathcal{Z} \to \mathcal{X}$ be a Lipschitz continuous neural network with respect to metrics $d_\mathcal{Z}$ and $d_\mathcal{X}$. If $Z_A$ lies within ball of radius $c$ centered around $z_0$, $B_c[z_0] \subseteq \mathcal{Z}$, with probability 1, then there exists a ball $B_d[x_0] \subseteq \mathcal{X}$ such that $P(f(Z_A) \in B_d(x_0)) = 1$.

%\begin{equation}
%\exists d \in \mathbb{R}, x_0 \in \mathcal{X} , P(f(Z_A) \in B_d(x_0)) = 1.
%\end{equation}

%\begin{equation}
% \begin{split}
%\exists c \in \mathbb{R}, z_0 \in \mathcal{Z} , P(d_\mathcal{Z}(Z_A,z_0)\le c)=1 \Rightarrow
% \exists d \in \mathbb{R}, x_0 \in \mathcal{X} , P(d_\mathcal{X}(f(Z_A),x_0)\le d)=1
% \end{split}
%\end{equation}

%$P(d_\mathcal{Z}(Z_A,z_0)\le c)=1$

%$P(d_\mathcal{X}(f(Z_A),x_0)\le d)=1$

% $([0,1]^n, d_\mathcal{Z})$. 
%Let $f: [0,1]^n \to \mathcal{X}$ be a Lipschitz continuous neural network with respect to metrics $d_\mathcal{Z}$ and $d_\mathcal{X}$. 
%Let $X_A = f(Z_A)$. 

%There exists a constant $c \in \mathbb{R}$ such that $P(d_\mathcal{X}(f(Z_A),f(\boldsymbol{0}))\le c)=1$
\end{proposition}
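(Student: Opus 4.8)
The plan is to exploit the defining inequality of Lipschitz continuity directly: there is a constant $K$ (the Lipschitz constant of $f$ with respect to $d_\mathcal{Z}$ and $d_\mathcal{X}$) such that $d_\mathcal{X}(f(z),f(z')) \le K\, d_\mathcal{Z}(z,z')$ for all $z,z' \in \mathcal{Z}$. The natural candidate for the center of the output ball is $x_0 = f(z_0)$, and the natural candidate for its radius is something just above $Kc$.

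First I would fix any $z$ in the support ball, i.e. $z \in B_c[z_0]$, so that $d_\mathcal{Z}(z,z_0) \le c$. Applying the Lipschitz bound with $z' = z_0$ gives $d_\mathcal{X}(f(z),x_0) = d_\mathcal{X}(f(z),f(z_0)) \le K\, d_\mathcal{Z}(z,z_0) \le Kc$. Hence $f(z) \in B_d(x_0)$ for any radius $d > Kc$; to be concrete I would take $d = Kc + 1$ (or, if one prefers a closed ball, $d = Kc$). This shows the set inclusion $f\bigl(B_c[z_0]\bigr) \subseteq B_d(x_0)$.

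Then I would push this through the probabilistic hypothesis. By assumption the event $\{Z_A \in B_c[z_0]\}$ has probability $1$. On that event, the inclusion just established yields $f(Z_A) \in B_d(x_0)$, so $\{Z_A \in B_c[z_0]\} \subseteq \{f(Z_A) \in B_d(x_0)\}$ as events, and monotonicity of probability gives $P(f(Z_A) \in B_d(x_0)) \ge P(Z_A \in B_c[z_0]) = 1$. Since probabilities are at most $1$, equality holds, which is exactly the claim with $x_0 = f(z_0)$ and $d = Kc+1$.

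There is essentially no hard step here; the only things to be careful about are bookkeeping details rather than genuine obstacles. One is the open-versus-closed ball mismatch in the statement ($B_c[z_0]$ closed but $B_d(x_0)$ open), which is handled by taking $d$ strictly larger than $Kc$. The other is a measurability remark: one should note that $f$, being Lipschitz (hence continuous), is Borel measurable, so $\{f(Z_A) \in B_d(x_0)\}$ is a legitimate event; and the degenerate case $K = 0$ (constant $f$) is covered trivially since then $f(Z_A) = x_0$ almost surely. If the appendix's definition of a Lipschitz neural network only asserts local Lipschitzness, I would additionally invoke that the support ball $B_c[z_0]$ is bounded to extract a uniform constant $K$ on that set; otherwise the global constant is used as-is.
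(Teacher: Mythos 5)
Your proposal is correct and follows essentially the same route as the paper's proof: take $x_0 = f(z_0)$, use the Lipschitz constant to map the radius-$c$ ball into a radius-$Kc$ ball, and transfer the probability-one event through the inclusion. The only differences are cosmetic — your extra care about open versus closed balls and measurability addresses notational looseness already present in the paper's statement rather than any substantive gap.
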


%\begin{proposition}\label{prop:bounded}
%Let $Z_A$ be a random variable with $p(a \le Z_A \le b)=1$ for some $a,b \in \mathbb{R}$.  
%Let $f$ be a Lipschitz continuous neural network. Let $X_A = f(Z_A)$. There exist $c, d \in \mathbb{R}$ such that $p(c \le X_A \le d)=1$.
%\end{proposition}

%\input{bounded_proof}

In short, a Lipschitz continuous function always maps a bounded distribution to another bounded distribution. 
Generators with a standard uniform input distribution, therefore, must be bounded.
So are generators with a bounded intermediate layer such as a tanh or sigmoid activation, since such a layer produces a bounded random variable that serves as input to the rest of the network.
While a generator of this type can potentially fit an arbitrary training set (regardless of the distribution that generated it), the probability of producing a sample outside of $B_{d}[x_0]$ is exactly zero. 

The left side of Figure \ref{fig:1} illustrates this phenomenon. 
We trained a GAN with uniform input noise on a heavy tailed data set, namely samples from a mixture of two Cauchy distributions.  
The GAN matches the modes of the distribution somewhat, but as predicted, the tails have a hard cutoff: every sample we generated was between +/- 15.  

%Subsequent Lipschitz continuous operations can transform this bounded ball, but cannot map it to an unbounded space.

%\begin{definition}
%Let  
%    \begin{equation}
%    	X_A = f(Z_A),\, Z_A \sim U(0,1)
%	\end{equation}
%	where $U(a,b)$ is the uniform distribution over the interval $[a,b]$.
%\end{definition}

%\begin{definition}
%Let  
%    \begin{equation}
%    	X_A = f(Z_U),\, Z_U \sim U(0,1)
%	\end{equation}
%	where $U(a,b)$ is the uniform distribution over the interval $[a,b]$.
%\end{definition}

%\begin{definition}
%Let $X_A = f(Z_U),\, Z_U \sim U(0,1)$ where $U(a,b)$ is the uniform distribution over the interval $[a,b]$.
%\end{definition}

\begin{figure*}
%\vspace{-10pt}
  \centering
   \includegraphics{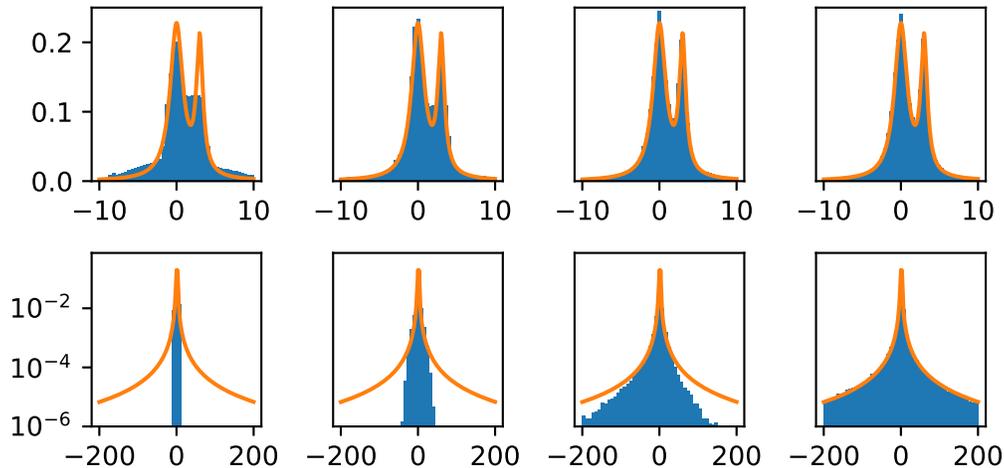}
   %\vspace{-20pt}
  \caption{Probability densities of generators with (from left to right) bounded, normal, lognormal and Pareto tails. Generators are trained on a Cauchy mixture with density shown in orange. The top row shows the center of the distribution on a linear scale, while the bottom row shows the tails on a log scale.}\label{fig:1}
%\vspace{-10pt}
\end{figure*}

\subsection{Generators with zero tail index marginals}
A generator that would seem to address this problem combines an unbounded input distribution with an unbounded neural network, such as $X_N$ defined below.

\begin{definition}\label{def:pwl} % TODO is this a definition or should it be more of a notation note?
    Let $f_{PWL}: \mathbb{R}^n \rightarrow \mathbb{R}$ be a piecewise linear (PWL) function with a finite number of linear regions.
\end{definition}

\begin{remark}\label{remark:pwl}
    $f_{PWL}$ is Lipschitz continuous with respect to Minkowski distances (metrics which generate the p-norms).
\end{remark}

Definition \ref{def:pwl} encompasses a broad class of neural networks (see, e.g., \cite[Theorem 2.1]{arora16}). 
PWL functions are closed under composition, so it is easy to show that a neural network composed of operations such as ReLUs, leaky ReLUs, max pooling, maxout activation, linear layers, concatenation, addition, and batch normalization (in "test" mode) all meet the requirements of Definition \ref{def:pwl}.

%Many other common operations, such as softplus, ELU, and SELU are asymptotically linear, and do not fundamentally change the story. 

%\begin{definition}\label{def:relu}
%Let $f_{ReLU}: \mathbb{R}^n \rightarrow \mathbb{R}$ be a neural network consisting of alternating linear transforms and ReLU activation functions. 
%\end{definition}

%\begin{remark}\label{remark:relu}
%$f_{ReLU}$ is a piecewise-linear function with a finite number of convex linear regions (see, e.g., \cite[Theorem 2.1]{arora16}).  It is also Lipschitz continuous with respect to Minkowski distances. 
%\end{remark}

\begin{definition}
    Let $N(\mu,\sigma)$ be a normal distribution. A normal generator $X_N$ is

    \begin{equation}
        X_N = f_{PWL}(Z_N),\, Z_N \sim N(0,1).
    \end{equation}

%$\boldsymbol{X_B}$ is multivariate with $n$ components: 

%\begin{equation}
%\boldsymbol{X_B} = (X_{B1}, ..., X_{Bn})
%\end{equation}
\end{definition}

Note that $f_{PWL}$ is univariate, and that an arbitrary neural network can be broken up into a set of univariate functions like $f_{PWL}$. In that setting, the distribution of $X_N$ represent a marginal distribution of the output. While it is possible to construct $f_{PWL}$ in such a way that $X_N$ is bounded, in general, $X_N$ has support across the whole real line. However, we now show that $X_N$ has Gaussian tails. 

\begin{theorem}\label{theorem:normal}
    Let $F_u(x)$ be the conditional excess distribution of $X_N$. If $X_N$ is not bounded above, then $F_u(x)$ converges to the normal conditional excess distribution as $u \rightarrow \infty$, i.e., $F(x)$ is a member of the Gumbel domain of attraction.
\end{theorem}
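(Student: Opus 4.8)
The plan is to show that the upper tail of $X_N$ is, for large enough arguments, \emph{exactly} a finite combination of shifted and rescaled standard normal tails, and then to invoke the classical fact that the normal law lies in the Gumbel domain of attraction together with the invariance of that domain under tail-equivalence. First I would use the hypothesis that $f_{PWL}$ has only finitely many linear regions: partition $\mathbb{R}$ into the finitely many intervals on which $f_{PWL}$ is affine. All but (at most) two of these are bounded, so $f_{PWL}$ is bounded on their union; the two unbounded ones are $(-\infty,t_-]$, where $f_{PWL}(z)=a_-z+b_-$, and $[t_+,\infty)$, where $f_{PWL}(z)=a_+z+b_+$. For every $x$ exceeding the (finite) supremum of $f_{PWL}$ over all the bounded regions and over the extreme regions whose outward slope is nonpositive, the event $\{f_{PWL}(Z_N)>x\}$ is controlled entirely by the extreme pieces whose outward slope points up, giving, for all sufficiently large $x$,
\begin{equation}
    \bar F_{X_N}(x)=\mathbf{1}[a_+>0]\,\bar\Phi\!\Big(\tfrac{x-b_+}{a_+}\Big)+\mathbf{1}[a_-<0]\,\bar\Phi\!\Big(\tfrac{x-b_-}{|a_-|}\Big),
\end{equation}
where $\bar\Phi$ is the standard normal CCDF. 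Since $X_N$ is not bounded above we have $\sup_z f_{PWL}(z)=+\infty$, which forces $a_+>0$ or $a_-<0$, so the right-hand side is a nonempty sum of terms $\bar\Phi((x-b)/\alpha)$ with $\alpha>0$.

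Second, I would single out the dominant term. By Mills' ratio $\bar\Phi(u)\sim\phi(u)/u$, so $\log\bar\Phi((x-b)/\alpha)=-x^2/(2\alpha^2)+O(x)$; comparing the at most two terms, the one with the largest scale $\alpha^\ast$ (and, in case of a tie in $\alpha$, the larger shift) dominates, since differences in $b$ and the polynomial prefactors contribute only a factor $e^{O(x)}$, negligible against the $e^{\Theta(x^2)}$ gap coming from distinct scales. Hence there are $c\in\{1,2\}$, $\alpha^\ast>0$ and $b^\ast\in\mathbb{R}$ with $\bar F_{X_N}(x)\sim c\,\bar\Phi\big((x-b^\ast)/\alpha^\ast\big)$ as $x\to\infty$; that is, $F_{X_N}$ is tail-equivalent to the CDF of an $N(b^\ast,(\alpha^\ast)^2)$ law.

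Third, the normal distribution is classically a member of the Gumbel domain of attraction, with auxiliary (scaling) function $a(u)\sim(\alpha^\ast)^2/u$, so its conditional excess distribution, rescaled by $a(u)$, converges to the generalized Pareto distribution $S(\,\cdot\,;0,1)$, the limiting exponential form of the normal conditional excess distribution. This property is stable under tail-equivalence: if $\bar F\sim c\,\bar G$ and $\bar G(u+t\,a(u))/\bar G(u)\to e^{-t}$ for all $t$, then $\bar F(u+t\,a(u))/\bar F(u)\to e^{-t}$ as well, by splitting the ratio into $\bar F/(c\bar G)$ evaluated at the two points and the normal ratio in between. Applying this with $G$ the normal CCDF from the previous step gives that $F_{X_N}$ lies in the Gumbel domain of attraction and that $F_u$ converges to the normal conditional excess distribution — the claim — and, via the Pickands–Balkema–de Haan theorem, confirms a tail index $\xi=0$.

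The step I expect to be the main obstacle is the first one: one has to verify carefully that the finitely-many-regions hypothesis really does confine the upper tail to the two unbounded affine pieces — ruling out any contribution from bounded regions and from extreme pieces whose outward slope is zero or points downward — and that the displayed identity is not merely asymptotic but exact once $x$ passes finitely many explicit thresholds. The remaining steps are routine: the dominant-term comparison is a one-line Mills-ratio estimate, and tail-equivalence preserving the Gumbel domain is a standard short argument.
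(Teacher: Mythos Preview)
Your proposal is correct and follows essentially the same approach as the paper: reduce the upper tail of $X_N$ to at most two affine images of the standard normal tail coming from the two unbounded linear regions of $f_{PWL}$, then argue that one component dominates so the excess distribution inherits normal (Gumbel-type) tail behavior. Your treatment is in fact somewhat more careful than the paper's --- you make the ``exact for large $x$'' identity explicit, use Mills' ratio to pick out the dominant scale, and invoke tail-equivalence stability of the Gumbel domain rather than just asserting mixture dominance --- but the route is the same.
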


We put the formal proof in the appendix, but outline the intuitions here. Because $f_{PWL}$ has a finite number of convex linear regions, its asymptotic behavior is therefore linear. Moving along any line in the input space eventually enters a "final" linear region, and $f_{PWL}$ acts linearly on all points beyond this threshold. The tails of the input distribution, therefore, are scaled and shifted by $f_{PWL}$, but they retain the shape of their original distribution. Multiple regions of the input space may map to a single output region so the output tail acts like a mixture of Gaussians, which asymptotically behaves like a single Gaussian.

A practice commonly used in GAN literature with heavy tailed data is to exponentiate the output of a generator such as $X_N$:
\begin{equation}
    X_{LN} = \exp{(f_{PWL}(Z_N)-1)},\, Z_N \sim N(0,1).
\end{equation}

Since the tails of $X_N$ act like a Gaussian, the tails of $X_{LN}$ follow lognormal asymptotics. This is a significant improvement in practice, but it still produces a distribution with a tail index of zero. The center two columns of Figure \ref{fig:1} capture these predicted behaviors. In the Gaussian case, the exponential decay is clearly evident in the poor tail approximation. The lognormal generator fares better, but undercounts extreme events. 

%TODO add CDF plot?

% LOGNORMAL STUFF HERE?
%A practice commonly used in GAN literature with heavy tailed data is to exponentiate the output of a generator such as $X_B$:
%\begin{equation}
%    X_C = \exp{(f_{ReLU}(Z_N)-1)},\, Z_N \sim Normal(0,1).
%\end{equation}

%Since the tails of $X_B$ act more or less Gaussian, the tails of $X_C$ follow lognormal asymptotics.
%This behavior is captured in action by Figure X. % TODO: Figure

%This is not sufficient for modeling the tail properties of many heavy-tailed distributions.
%From a theoretical perspective, such a model must have finite variance, and therefore does not model an important property of certain distributions.
%For a practical example, we use a mixture of two Cauchy variables, producing a bimodal distribution with heavy tails. 
%We train generators with the forms of $X_B$ and $X_C$ on ten thousand samples from this distribution. 
%These both fit the training data very closely, but when we compare 1M samples with the true distribution, the differences in the asymptotic behavior become clear. 

\subsection{Pareto generators}\label{sec:pgen}

Ideally, our GAN generator would match some belief we have about the marginal tail behavior. The generators we have discussed so far are not able to capture this type of belief for heavy-tailed distributions. We address this shortcoming with the Pareto GAN generator. In its basic form, a Pareto GAN generator takes a GPD input with tail index $\xi$, which matches the tail index belief from the data.
$\xi$ can be chosen in a variety of ways, such as a tail index estimator (e.g. Hill's estimator \cite{deheuvels88, resnick97}, a kernel-type estimator \cite{wolf2003}), a prior belief, or estimation during the training process.

%The GPD is a particularly convenient distribution to sample from. 
%To generate samples from a particular distribution, one can put Uniform(0,1) samples into the inverse CCDF function. 
%Since we are putting this distribution into a neural network, scales and shifts are not terribly important. 
%Inverting the GPD CCDF from definition \ref{def:gpd} and dropping the scale and shift terms, we get a convenient form for a GPD random variable. 

\begin{definition}
    Let $Z_{\xi} = (U^{-\xi} - 1)/\xi,\, U \sim Uniform(0, 1)$, which is a GPD random variable with tail index $\xi$ and a CCDF of the form $S(x;\xi,1)$. A Pareto GAN generator $X_{\xi}$ parameterized by tail index $\xi$ is defined
    \begin{equation}
        X_{\xi} = f_{PWL}(Z_{\xi}).
    \end{equation}
\end{definition}

\begin{theorem}\label{theorem:pareto}
    Let $F_u(x)$ be the conditional excess distribution of $X_{\xi}$. If $X_{\xi}$ is not bounded above, then $F_u(x)$ converges to $S(x;\xi,\sigma)$ for some $\sigma \in \mathbb{R}$. 
\end{theorem}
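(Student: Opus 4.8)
The plan is to follow the template of the proof of Theorem~\ref{theorem:normal}: replace the global map $f_{PWL}$ by its behavior on its ``final'' linear piece, and then push the explicit GPD tail of $Z_\xi$ through it. First, note that for $\xi < 0$ the variable $Z_\xi = (U^{-\xi}-1)/\xi$ has bounded support, so Proposition~\ref{prop:bounded} forces $X_\xi$ to be bounded and the hypothesis is vacuous; hence we may assume $\xi \ge 0$, in which case $Z_\xi$ is supported on $\mathbb{R}_+$. Since $f_{PWL}$ restricted to $\mathbb{R}_+$ is piecewise linear with finitely many pieces (Definition~\ref{def:pwl}), there is a threshold $t \ge 0$ and constants $a,b$ with $f_{PWL}(z) = az+b$ for all $z \ge t$. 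The key preliminary observation is that ``$X_\xi$ is not bounded above'' is equivalent to $a > 0$: if $a \le 0$ then $f_{PWL}$ is bounded above on $[t,\infty)$ and, being continuous (Remark~\ref{remark:pwl}), also bounded on the compact interval $[0,t]$, so $X_\xi = f_{PWL}(Z_\xi)$ would be bounded above. This is the analogue of the opening move in Theorem~\ref{theorem:normal}, and it is actually simpler here: because $\mathbb{R}_+$ has a single end at $+\infty$, the tail of $X_\xi$ is driven by a single linear region, so no ``mixture'' argument is needed.

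Next I would compute the far tail of $X_\xi$ explicitly. Put $M = \sup_{0 \le z \le t} f_{PWL}(z) < \infty$. For any $x > M$, the event $\{f_{PWL}(Z_\xi) > x\}$ cannot be realized by $Z_\xi \in [0,t]$, and once $x$ is also large enough that $(x-b)/a \ge t$ it equals $\{Z_\xi > (x-b)/a\}$. Since $Z_\xi$ has CCDF $S(\,\cdot\,;\xi,1)$ by construction, this gives, for all sufficiently large $x$,
\begin{equation}
    \bar F_{X_\xi}(x) = S\!\left(\tfrac{x-b}{a};\,\xi,1\right),
\end{equation}
i.e.\ $\bar F_{X_\xi}(x) = (1 + \xi(x-b)/a)^{-1/\xi}$ when $\xi \ne 0$ and $e^{-(x-b)/a}$ when $\xi = 0$. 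Plugging this into Definition~\ref{def:excess}, for large $u$ the excess CCDF $\bar F_{X_\xi}(u+y)/\bar F_{X_\xi}(u)$ telescopes: with $\sigma(u) := a + \xi(u-b)$ one obtains $\bar F_u(y) = (1 + \xi y/\sigma(u))^{-1/\xi} = S(y;\xi,\sigma(u))$ when $\xi \ne 0$, and $\bar F_u(y) = e^{-y/a} = S(y;0,a)$ when $\xi = 0$. Thus for all sufficiently large $u$ the conditional excess of $X_\xi$ is \emph{exactly} a GPD with tail index $\xi$.

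This already yields the statement: for $\xi = 0$ the scale is the constant $a$, so $F_u$ converges to (indeed, eventually equals) the GPD with CCDF $S(\,\cdot\,;0,a)$; for $\xi > 0$ one reads ``converges to $S(\,\cdot\,;\xi,\sigma)$ for some $\sigma$'' in the Pickands--Balkema--de Haan sense, i.e.\ up to the usual affine renormalization of the excess (rescaling by $\sigma(u)$ turns $\bar F_u$ into $S(\,\cdot\,;\xi,1)$), exactly as ``the normal conditional excess distribution'' is meant in Theorem~\ref{theorem:normal}; either way $F$ lies in the domain of attraction corresponding to tail index $\xi$. The only real obstacle is bookkeeping rather than estimation: one must verify carefully that (i) the finitely many bounded linear pieces of $f_{PWL}$ contribute nothing to the tail beyond the finite level $M$, and (ii) the limit is stated in the correct normalization when $\xi > 0$, since there $\sigma(u) \to \infty$. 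Everything else is a direct, exact computation.
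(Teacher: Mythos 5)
Your proof is correct, but it takes a genuinely different route from the paper's. The paper reduces the claim to a characterization of the GPD domain of attraction via ``asymptotic moments'' $m_\gamma(X)=\lim_{t\to\infty}E[(X/t)^\gamma\mid X>t]$, citing Theorem~8(a) of \cite{balkema74}, and then argues (somewhat informally) that a finite mixture of scaled and shifted copies of $Z_\xi$ has the same finite asymptotic moments as $Z_\xi$ itself. You instead exploit the fact that $Z_\xi$ is supported on $\mathbb{R}_+$ (so only one ``final'' linear region matters --- the paper's treatment of both $z>c$ and $z<-c$ is inherited from Theorem~\ref{theorem:normal} and is unnecessary here) and compute the tail CCDF of $X_\xi$ \emph{exactly}: $\bar F_{X_\xi}(x)=S((x-b)/a;\xi,1)$ for all large $x$, whence $\bar F_u(y)=S(y;\xi,\sigma(u))$ with $\sigma(u)=a+\xi(u-b)$. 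This is more elementary and self-contained (no external citation, no mixture lemma), and it yields a strictly stronger conclusion: the conditional excess is exactly a GPD with tail index $\xi$ beyond a finite threshold. Your observation that for $\xi>0$ the scale $\sigma(u)\to\infty$, so the theorem's ``converges to $S(x;\xi,\sigma)$'' must be read with the usual Pickands--Balkema--de Haan renormalization of the excess, is a correct and worthwhile precision that the paper's own statement and proof gloss over (the paper's conclusion ``converges to $S(x;\xi,1)$'' has the same looseness). Your dispatching of $\xi<0$ via bounded support and Proposition~\ref{prop:bounded} is a minor completeness bonus the paper omits.
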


%\input{pareto_proof}

%The easiest way to generate a distribution with a desired tail index is to simply put 

%with heavy tails is to simply put heavy tailed noise into the network. We can accomplish this with any heavy-tailed distribution, the GPD is a logical choice. 

%The inverse CDF of the Pareto is defined as follows:

%\begin{equation}
%    F^{-1}(z;\mu, \sigma, \xi) = \mu + \frac{\sigma}{\xi}(z^{-\xi} -1),\, \xi \neq 0
%\end{equation}

%Putting Uniform(0,1) noise into this inverse CDF generates a GPD random variable.
%The key parameter is $\xi$, known as the tail index. % Should we discuss tail index more in depth in preliminaries as to point out how it can be computed via comparison in the limit with indicator funcitons? This may be useful intuition for some, or a waste of space to talk about...
%The tail index determines the shape of the asymptotic function.
%Linear transformations do not change the tail index.

%While a Pareto generator \textit{can} be bounded, we assume that an appropriate tail index will simplify the function that $f_{ReLU}$ must learn, so the training process will gravitate towards solutions the exploit the tail index. 

%We see this behavior in our experiments with the Pareto GAN. 

The proof is included in the appendix and follows a similar argument to that of Theorem \ref{theorem:normal}. 
Figure \ref{fig:1} illustrates the effectiveness of the Pareto GAN compared with the other approaches. 
All three trained GANs use the exact same network architecture and are trained using energy distance \cite{sejdinovic13}\footnote{To ensure convergence, we train the Pareto GAN with the 2-root energy distance defined in section \ref{sec:loss}.}. 
With $\xi=1$, the tails of the GPD input noise match the tail index of the Cauchy mixture, but the distributions are very different around the modes. 
The GPD is one-sided with a uniformly decreasing density. % possible cut?
The Cauchy mixture is two-sided and bimodal. % possible cut?
The Pareto GAN, however, is able to learn an accurate approximation, both around the modes and in the tails.

We can also define a more general form of the Pareto generator.

%\begin{definition}
%Given a Pareto GAN generator $X_{\alpha}$, 

%A Pareto GAN generator $X_{\xi,\alpha}$ parameterized by tail index $\xi$ and network parameter $\alpha$ is defined

%X_{\xi,\alpha} = sign(f_{ReLU}(Z_{\alpha}))|f_{ReLU}(Z_{\alpha})|^{\xi / \alpha}\, \alpha>0.

%\begin{equation}
%    X_{\xi,\alpha} = sign(f_{ReLU}(Z_{\alpha}))|f_{ReLU}(Z_{\alpha})|^{\xi / \alpha}\, \alpha>0.
%\end{equation}
%\end{definition}

\begin{corollary}\label{corr:pareto2}

%\begin{theorem}\label{theorem:pareto2} % TODO consider quoting other result and making corr instead of theorem.
    Let $X_{\alpha}$ be a Pareto GAN generator with tail index $\alpha$. Let 

    \begin{equation}
        X_{\beta} = sign(X_{\alpha}) |X_{\alpha}|^{\beta},\ \beta>0
    \end{equation}

    Let $F_u(x)$ be the conditional excess distribution of $X_{\beta}$. If $X_{\beta}$ is not bounded above, then $F_u(x)$ converges to $S(x;\alpha\beta,\sigma)$ for some $\sigma \in \mathbb{R}$. 
\end{corollary}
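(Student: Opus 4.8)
The plan is to transport the statement into the language of regularly varying tails, on which the power map acts transparently, and then invoke Theorem~\ref{theorem:pareto}. Write $p_\beta(t) = \mathrm{sign}(t)\,|t|^\beta$, so $X_\beta = p_\beta(X_\alpha)$. For $\beta>0$ the map $p_\beta$ is a strictly increasing continuous bijection of $\mathbb{R}$ onto itself, with inverse $p_{1/\beta}$; hence $X_\beta$ is not bounded above precisely when $X_\alpha$ is not bounded above, so the hypothesis of the corollary is exactly what is needed to apply Theorem~\ref{theorem:pareto} to $X_\alpha$. Moreover, the conditional excess $F_u$ as $u\to\infty$ depends only on the upper tail, and for $x>0$ we have the identity $\bar F_{X_\beta}(x) = P\!\left(X_\alpha > x^{1/\beta}\right) = \bar F_{X_\alpha}\!\left(x^{1/\beta}\right)$. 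So it suffices to track how the substitution $t\mapsto t^{1/\beta}$ acts on the upper tail of $X_\alpha$.

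Consider first the heavy-tailed case $\alpha>0$. By Theorem~\ref{theorem:pareto}, $X_\alpha$ lies in the max-domain of attraction with tail index $\alpha$; by Gnedenko's characterization of that domain, this is equivalent to $\bar F_{X_\alpha}$ being regularly varying at infinity with index $-1/\alpha$, i.e. $\bar F_{X_\alpha}(t) = t^{-1/\alpha} L(t)$ for a slowly varying $L$. Substituting $t = x^{1/\beta}$ gives $\bar F_{X_\beta}(x) = x^{-1/(\alpha\beta)}\, L\!\left(x^{1/\beta}\right)$, and since $L$ is slowly varying the function $x\mapsto L(x^{1/\beta})$ is again slowly varying (composition with a fixed power preserves slow variation). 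Thus $\bar F_{X_\beta}$ is regularly varying with index $-1/(\alpha\beta)$, with $\alpha\beta>0$. Applying the same characterization to $X_\beta$ — equivalently the Pickands--Balkema--de Haan theorem~\cite{balkema74} — its conditional excess $F_u$ converges to $S(\cdot;\alpha\beta,\sigma)$ for a suitable $\sigma$.

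It remains to dispose of the degenerate cases. If $\alpha<0$, then $Z_\alpha$ has bounded support, so $X_\alpha = f_{PWL}(Z_\alpha)$ and hence $X_\beta$ are bounded above and the claim is vacuous. If $\alpha=0$, the tail of $X_\alpha$ is of exponential (Gumbel-domain) type and the regular-variation argument does not apply directly; here I would argue as in the proof of Theorem~\ref{theorem:normal}, using that $f_{PWL}$ is eventually linear along the ray to the right to write $X_\alpha = aZ_\alpha + b$ on the far upper tail, then compute $1 - \bar F_{X_\beta}(u+y)/\bar F_{X_\beta}(u)$ explicitly and check that, as $u\to\infty$, it converges to an exponential, which is exactly $S(\cdot;0,\sigma) = S(\cdot;\alpha\beta,\sigma)$. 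This same eventual-linearity computation in fact yields an alternative self-contained proof of the $\alpha>0$ case that bypasses the citation to Gnedenko.

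I expect the main obstacle to be the appeal to the equivalence between convergence of the conditional excess to a GPD with index $\xi$ and regular variation of the complementary CDF with index $-1/\xi$: this is a standard but genuinely nontrivial extreme-value fact, it is cleanest only for $\xi>0$, and it is what forces the separate (if easy) treatment of the $\xi=0$ tail. A minor point to get right is that the hypothesis ``$X_\beta$ not bounded above'' genuinely feeds Theorem~\ref{theorem:pareto}, which it does because $p_\beta$ is an increasing bijection of the line.
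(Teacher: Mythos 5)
Your proof of the substantive case ($\alpha>0$) is correct, but it routes through a different characterization of the Fr\'echet domain than the paper does. The paper's proof works with the ``asymptotic moments'' $m_{\gamma}(X)=\lim_{t\to\infty}E[(X/t)^{\gamma}\mid X>t]$ and invokes Theorem 8(a) of \cite{balkema74}, which says that convergence of the conditional excess to $S(\cdot;\xi,\sigma)$ is equivalent to $m_{\gamma}$ being finite exactly for $\gamma<1/\xi$; the whole corollary then reduces to the one-line identity $m_{\gamma}(X_{\beta})=m_{\gamma\beta}(X_{\alpha})$. You instead translate Theorem~\ref{theorem:pareto} into regular variation of $\bar F_{X_{\alpha}}$ with index $-1/\alpha$ via Gnedenko's characterization, push the power map through ($\bar F_{X_{\beta}}(x)=\bar F_{X_{\alpha}}(x^{1/\beta})$, and slow variation is preserved under composition with a fixed power), and come back via Pickands--Balkema--de Haan. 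Both arguments hinge on the same substitution $t\mapsto t^{1/\beta}$; yours makes the reduction to the upper tail and the preservation of unboundedness (via $p_{\beta}$ being an increasing bijection) more explicit, at the cost of importing the nontrivial equivalence between GPD-convergence with $\xi>0$ and $\bar F\in RV_{-1/\xi}$, which you correctly identify as the load-bearing external fact. The paper's version avoids regular variation entirely but leans just as hard on an external theorem.

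One caution on your treatment of the degenerate cases, which the paper simply does not address: your claim that for $\alpha=0$ the conditional excess of $X_{\beta}$ converges to an exponential with a \emph{fixed} scale $\sigma$ is false for $\beta\neq 1$. If $\bar F_{X_{\alpha}}(t)\sim Ce^{-\lambda t}$ then $\bar F_{X_{\beta}}(x)\sim Ce^{-\lambda x^{1/\beta}}$ is a Weibull-type tail; it remains in the Gumbel domain, but the excess distribution only converges to an exponential after normalizing by a $u$-dependent scale function $\sigma(u)\propto u^{1-1/\beta}$, not a constant. Since the corollary is stated and used only for heavy tails ($\alpha>0$), this does not damage your main argument, but as written that paragraph asserts more than is true.
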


The proof builds on Theorem \ref{theorem:pareto} and is in the appendix. Corollary \ref{corr:pareto2} gives a degree of flexibility in constructing a Pareto GAN generator. Importantly, in a multivariate setting we can choose different values of $\beta$ for each output dimension. This allows us to learn a complex joint distribution over variables with different tail indexes. 
This flexibility suggests that there is a clear path to apply Pareto GAN to a broad class of distributions.

\section{LEARNING HEAVY TAILED DISTRIBUTIONS WITH GANS}\label{sec:loss}

Regardless of the form of the generator being used, in order to train it, we need a loss function with a reliable gradient.
As discussed in \cite{arjovsky17}, said loss function should provide a non-zero gradient for manifolds with zero-measure intersections.
This rules out f-divergences such as Jensen-Shannon divergence.
We identify two additional properties that guide our search for a loss function.

\begin{itemize}[leftmargin=*]
    \item\emph{Finiteness} The loss function should be finite, with both finite gradients and finite expectations of sample gradients. In particular, our choice of loss function must be well defined as such on the target distribution and all distributions that the GAN can generate. 

    \item\emph{Minimal outlier gradient decay} The gradient of the loss function with respect to an outlier should decay as slowly possible.
\end{itemize}

The Wasserstein 1 distance\footnote{Full definition of Wasserstien distance in appendix} (aka Earth-Mover distance) is a very popular loss function for training GANs \cite{bellemare17, arjovsky17}. The Wasserstein distance between two distributions is well defined when the distributions have finite first moments\footnote{Full definition of moments in appendix}.
However, when this is not the case, convergence is no longer guaranteed.
Energy distance \cite{sejdinovic13} is another metric with similar properties.

\begin{definition}\label{def:energydist}
    The energy distance, $E$ between distributions $P$ and $Q$ with random variables $X$, $X'$, $Y$, and $Y'$, on a metric space $(A, d)$ is
    \begin{equation}
        E(P,Q) = 2\mathbb{E}d(X,Y) - \mathbb{E}d(X,X') - \mathbb{E}d(Y,Y')
    \end{equation}
    which is finite, and well defined when $P$ and $Q$ have finite first moments \cite{sejdinovic13}.
\end{definition}

We observe that by changing the metric on the underlying space we can give our target distributions finite first moments.
There are two ways that we can approach this.  First, we consider bounded metrics.
Under a bounded metric all probability density functions will have finite moments.
For example, 

\begin{definition}\label{def:bounded_metric}
    Let $\alpha > 0$. The bounded Euclidean metric induced by $\alpha$ is
    \begin{equation}
        d_\alpha(x,y) = \frac{||x-y||_2}{\alpha + ||x-y||_2}
    \end{equation}
\end{definition}

\begin{remark}
    For all $x,y$, it holds that $d_\alpha(x,y) < 1$, hence for all PDFs $f$ and values $z_0$, $\int d_\alpha(z,z_0) f(z)dz < \int f(z)dz = 1$.
\end{remark}

While using spaces with a bounded Euclidean metric ensures the finiteness of our loss functions, the produced gradient fails to provide much information about the tails.
Intuitively we note that because distances are bounded above by one, "large" and "very large" distances are essentially impossible to distinguish using $d_\alpha$.
Therefore, the gradient of this distance quickly decays to zero, and the metric is not useful in gradient descent. 
Other GAN loss functions, such as the RBF MMD \cite{gretton12} are also bounded and, as such, have this property. 
Hence, in order to satisfy our second criteria, we instead modify the standard Euclidean notion of distance on $\mathbb{R}$.

\begin{definition}\label{def:root_euclid}
    Let $\gamma > 0$. The $\gamma$ Root-Euclidean distance is
    \begin{equation}
        d_{\gamma}(x,y) = ||x-y||_2^{1/\gamma}
    \end{equation}
\end{definition}

\begin{remark}
    For all $\gamma \geq 1$, $d_{\gamma}$ defines a metric on $\mathbb{R}_+$ as $x^{1/\gamma}$ is a monotonically increasing concave function.
\end{remark}

\begin{remark}
    The closer $\gamma$ is to one, the more similar $d_\gamma$ and the Euclidean distance metrics are.
\end{remark}

Here, it is less obvious that we will get finite first moments.
In fact, it is the case that in order to assure finite moments on a given distribution, we must choose our $\gamma$ to suit it.
The following theorem presents the appropriate bound on $\gamma$.

\begin{theorem}\label{thm:finite_moment}
    Let P be a Generalized Pareto Distribution with tail index $\xi$. For all $\gamma > \xi$, P has a finite first moment on the space $(\mathbb{R}, d_\gamma)$.
\end{theorem}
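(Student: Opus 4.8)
The plan is to reduce the statement to a one-line convergence check for a power-law tail integral. Recall that the first moment of a distribution $P$ on a metric space $(A,d)$ means $\mathbb{E}_{X\sim P}\, d(X,x_0)$ for a fixed reference point $x_0\in A$; since $|X-x_0|^{1/\gamma}\le C_\gamma\big(|X|^{1/\gamma}+|x_0|^{1/\gamma}\big)$ for a constant $C_\gamma$ depending only on $\gamma$ (subadditivity of $t\mapsto t^{1/\gamma}$ when $\gamma\ge 1$, and a trivial $\max$-bound otherwise), finiteness does not depend on the choice of $x_0$. A GPD random variable $X$ is supported on $\mathbb{R}_+$, so I would take $x_0=0$ and reduce the claim to showing $\mathbb{E}[X^{1/\gamma}]<\infty$.

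I would first dispose of the degenerate cases. If $\xi<0$ the GPD has bounded support $[0,-\sigma/\xi]$, and if $\xi=0$ it is the exponential distribution with CCDF $e^{-z/\sigma}$; in either case all positive moments are finite, so the conclusion holds (and the hypothesis $\gamma>\xi$ is vacuous or automatically satisfied). The remaining, substantive case is $\xi>0$, where $P(X>t)=S(t;\xi,\sigma)=(1+\xi t/\sigma)^{-1/\xi}$ for $t\ge 0$.

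For $\xi>0$ I would use the tail (\emph{layer-cake}) formula for a nonnegative random variable: with $p=1/\gamma>0$,
\begin{equation}
    \mathbb{E}[X^{p}]=\int_0^\infty p\,t^{p-1}\,P(X>t)\,dt=\frac{1}{\gamma}\int_0^\infty t^{\frac{1}{\gamma}-1}\,(1+\xi t/\sigma)^{-\frac{1}{\xi}}\,dt .
\end{equation}
Splitting the integral at $t=1$: near $0$ the integrand is of order $t^{1/\gamma-1}$ with exponent $1/\gamma-1>-1$, so the integral over $(0,1]$ is finite; as $t\to\infty$, $(1+\xi t/\sigma)^{-1/\xi}$ is of order $t^{-1/\xi}$, so the integrand is of order $t^{1/\gamma-1/\xi-1}$, which is integrable on $[1,\infty)$ exactly when $1/\gamma-1/\xi-1<-1$, i.e.\ $1/\gamma<1/\xi$, i.e.\ $\gamma>\xi$. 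This is precisely the hypothesis, so $\mathbb{E}[X^{1/\gamma}]<\infty$, completing the argument.

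There is no real obstacle here; the argument is routine extreme-value bookkeeping. The only points that need a little care are making the \emph{first moment on a metric space} notion explicit and checking independence from the base point, and stating the $\xi\le 0$ cases so that the theorem reads correctly for every real $\xi$. An equivalent route avoiding the layer-cake identity is to write $X^{1/\gamma}\le 1+X^{1/\gamma}\mathbf{1}_{\{X>1\}}$ and compare the resulting tail expectation directly with $\int_1^\infty t^{1/\gamma-1/\xi-1}\,dt$; the convergence criterion is identical.
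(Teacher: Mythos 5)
Your proof is correct and follows essentially the same approach as the paper's: both reduce the claim to the convergence of a power-law tail integral and apply the power rule, arriving at the identical condition $1/\gamma - 1/\xi - 1 < -1$. The only differences are cosmetic or additive — you use the layer-cake/CCDF form where the paper integrates $z^{1/\gamma}$ against the GPD density, and you additionally dispose of the $\xi \le 0$ cases, check integrability near the origin, and note base-point independence, all of which the paper leaves implicit.
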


\begin{proof}
    Let $f$ be the PDF of $P$, and consider the first moment of $f$: 
    \begin{equation}
    	\int d_{\gamma}(z, 0) f(z) dz = \int_{\mathbb{R}_+} z^{1/\gamma} (1 + \xi z)^{-\frac{\xi + 1}{\xi}} dz.
    \end{equation}
    
    By estimation, the following two statements are equivalent.
    \begin{equation}
    	\int_{\mathbb{R}_+} z^{1/\gamma} (1 + \xi z)^{-\frac{\xi + 1}{\xi}} dz  < \infty 
    \end{equation}
    \begin{equation}
    	\int_{\mathbb{R}_+} (1 + z)^{-1 + (1/\gamma - 1/\xi)} dz < \infty 
    \end{equation}
    
    By the power rule for integral convergence we see that this statement is equivalent to
    \begin{equation}
        -1 + (1/\gamma - 1/\xi) < -1
    \end{equation}
    which is equivalent to the statement $\gamma > \xi$.
\end{proof}

With this result, we can provide convergence guarantees for both Wasserstein and Energy distance on heavy tailed distributions, so long as we are using the correct distance metric.

\begin{corollary}\label{corr:finite_distances}
    If GPD distributions $P$ and $Q$ with PDFs $X$ and $Y$ respectively have tail indexes $\xi_p, \xi_q < \gamma$, then the Wasserstein and energy distances between them, $W_1(P,Q)$, and $E(P,Q)$ respectively, on the space $(\mathbb{R}, d_\gamma)$ are finite.
\end{corollary}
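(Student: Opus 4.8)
The plan is to reduce the statement entirely to Theorem~\ref{thm:finite_moment} together with the two definitional facts already recorded in the text: that the energy distance (Definition~\ref{def:energydist}) is finite and well defined whenever both distributions have finite first moments on the underlying metric space, and that the Wasserstein~1 distance enjoys the analogous property. The first thing I would pin down is what ``finite first moment on $(\mathbb{R}, d_\gamma)$'' should mean here, namely $\mathbb{E}\,d_\gamma(X, x_0) < \infty$ for some reference point $x_0$; this is independent of the choice of $x_0$ by the triangle inequality, since $d_\gamma(z, x_0) \le d_\gamma(z, 0) + d_\gamma(0, x_0)$ gives $\mathbb{E}\,d_\gamma(X, x_0) \le \mathbb{E}\,d_\gamma(X, 0) + d_\gamma(0, x_0)$. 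Applying Theorem~\ref{thm:finite_moment} to $P$ with $\xi = \xi_p < \gamma$ and to $Q$ with $\xi = \xi_q < \gamma$ then yields $\mathbb{E}\,d_\gamma(X, 0) < \infty$ and $\mathbb{E}\,d_\gamma(Y, 0) < \infty$, so both $P$ and $Q$ have finite first moment on $(\mathbb{R}, d_\gamma)$.

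Next I would handle the two distances. For the energy distance, I would bound each of the three expectations appearing in Definition~\ref{def:energydist} by the triangle inequality: with $X'$ and $Y'$ the independent copies, $\mathbb{E}\,d_\gamma(X, Y) \le \mathbb{E}\,d_\gamma(X, 0) + \mathbb{E}\,d_\gamma(0, Y)$, $\mathbb{E}\,d_\gamma(X, X') \le 2\,\mathbb{E}\,d_\gamma(X, 0)$, and $\mathbb{E}\,d_\gamma(Y, Y') \le 2\,\mathbb{E}\,d_\gamma(Y, 0)$. By the previous paragraph all three are finite, so $E(P,Q)$ is a finite linear combination of finite quantities, and in particular no $\infty - \infty$ indeterminacy arises. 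For $W_1$, I would use that it is an infimum over couplings, hence bounded above by the value at the independent coupling, $W_1(P,Q) \le \mathbb{E}_{X \sim P,\, Y \sim Q}\, d_\gamma(X, Y) \le \mathbb{E}\,d_\gamma(X, 0) + \mathbb{E}\,d_\gamma(0, Y) < \infty$, with the matching lower bound coming from nonnegativity of $W_1$; alternatively one can simply invoke the standard fact, cited in the text for the Euclidean metric, that $W_1$ is finite between distributions with finite first moments on a metric space.

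I do not expect a genuine obstacle here once Theorem~\ref{thm:finite_moment} is available; the argument is essentially bookkeeping. The two small points that deserve a sentence of care are, first, that the notion of ``first moment'' used in the finiteness criteria for $W_1$ and $E$ (expected distance to a fixed point) really is the quantity $\int d_\gamma(z,0)\, f(z)\, dz$ controlled by Theorem~\ref{thm:finite_moment}, and second, that $d_\gamma$ satisfies the metric axioms on the support of the GPDs (which is $\mathbb{R}_+$, so that $\gamma \ge 1$ makes $t \mapsto t^{1/\gamma}$ concave and subadditive), which is exactly what licenses the triangle-inequality bounds above; both are settled by the remarks following Definition~\ref{def:root_euclid}.
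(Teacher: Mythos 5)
Your proposal is correct and follows exactly the route the paper intends: the paper states this corollary without any written proof, treating it as an immediate consequence of Theorem~\ref{thm:finite_moment} together with the cited finiteness criteria for $W_1$ and $E$ under finite first moments. Your bookkeeping (triangle inequality for the three energy-distance terms, the independent coupling as an upper bound for $W_1$, and the observation that $\gamma \ge 1$ is what licenses subadditivity of $t \mapsto t^{1/\gamma}$) is a faithful and slightly more careful elaboration of that implicit argument.
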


\section{EXPERIMENTS}\label{sec:exp}

\subsection{Approximating Univariate Distributions}

%\begin{wrapfigure}[23]{r}{0.45\textwidth}
%\vspace{-50pt}
%  \begin{center}
%    \includegraphics{}
%  \end{center}
%  \vspace{-20pt}
%  \caption{Top: log-log plot for Wiki Traffic data. Bottom: Area of tail errors for different width generators on keystroke data.}\label{fig:2}
%\end{wrapfigure}

\begin{figure}
    \centering
    %\vspace{-10pt}
    \includegraphics{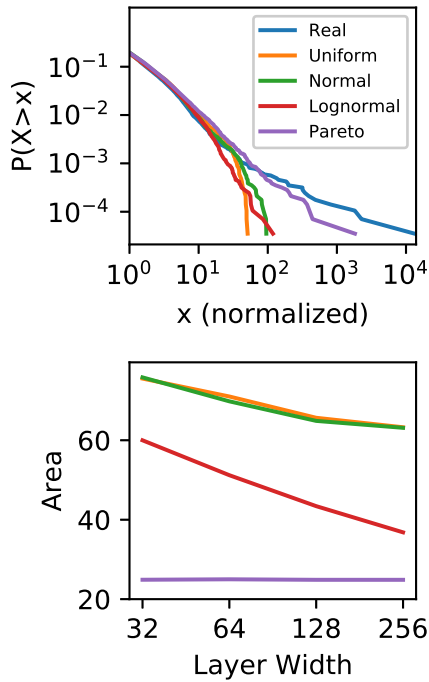}
    %\vspace{-20pt}
    \caption{Top: log-log plot for Wiki Traffic data. Bottom: Area of tail errors for different width generators on keystroke data.}\label{fig:2}
    %\vspace{-10pt}
\end{figure}

We now demonstrate our GANs on a few different types of data. First we demonstrate our Pareto GANs on a handful of univariate heavy-tailed datasets:

\begin{itemize}
    \item\emph{136 million keystrokes.} This dataset includes inter-arrival times between keystrokes for a variety of users \cite{dhakal2018}.

    \item\emph{Wikipedia Web traffic.} This dataset includes the daily number of daily views of various Wikipedia articles during 2015 and 2016\footnote{https://www.kaggle.com/c/web-traffic-time-series-forecasting}. We train the GANs to reproduce the distribution of view counts.

    \item\emph{SNAP LiveJournal.} This dataset consists of a network graph for the LiveJournal social network \cite{leskovec2008}. We train the GANs to reproduce the distribution of edge counts.

    \item\emph{S\&P 500 Daily Changes.} This dataset consists of the daily prices of the S\&P 500 stocks from 1999 through 2013 \footnote{Downloaded from https://quantquote.com/historical-stock-data, data has been recently removed}. We train the GANs to reproduce the distribution of daily percentage changes in individual stocks.

\end{itemize}

We randomly partition the data into training, validation, and test sets. Training and validation each have a small fraction of the full dataset (<10\%), while the remainder becomes the test set. This allows us to test the ability of the GANs to extrapolate the probabilities events that are more extreme than those in the training data. We normalize all datasets by dividing by the average magnitude of the training set.

Our evaluations consider two metrics. First, we use the Kolmogorov–Smirnov (KS) test statistic between the real and generated samples. The KS statistic is defined as the largest magnitude difference between the CDFs of two distributions, and it is used to test the hypothesis that two sets of samples are from different distributions \cite{hodges58}. KS gives us an indication of how well the modes of the data match, and is independent of any of the loss functions used in training. We use the implementation in scikitlearn \cite{pedregosa2011scikit}. Secondly, we compute the area between the log-log plots of the empirical CCDFs of the real and generated samples. This metric gives us a good indication of how well the generated tails match the real samples. Figure \ref{fig:2} (top) gives an example of such a plot. For $n$ real samples and inverse empirical CCDFs $\bar F_R^{-1}$ and $\bar F_G^{-1}$, the formula is

%\begin{equation}
%\begin{split}
%Ar&ea = \\
% & \sum_{i=1}^{n} \left|log \bar F_R^{-1}\left(\frac{i}{n}\right)- log \bar F_G^{-1}\left(\frac{i}{n}\right)\right| log\frac{i+1}{i}.
%\end{split}
%\end{equation}

\begin{equation}\label{eq:area}
    Area = \sum_{i=1}^{n} \left|log \bar F_R^{-1}\left(\frac{i}{n}\right)- log \bar F_G^{-1}\left(\frac{i}{n}\right)\right| log\frac{i+1}{i}.
\end{equation}

%\begin{equation}
%Area = \sum_{i=1}^{n} |log\bar F_R(i/n) - log\bar F_G(i/n) |  log(\fract{i+1}{i})
%\end{equation}

%\subsection{Training procedures}
We used a common network architecture and training procedure for all experiments. The network consisted of four fully connected layers with 32 hidden units per layer and ReLU activations. For the Pareto GAN, we estimate the tail index from the training data using an open-source implementation\footnote{https://github.com/ivanvoitalov/tail-estimation} of the kernel-type estimator from \cite{wolf2003}. We used a batch size of 256 in all cases. We vary learning rate from $10^{-4}$ to $10^{-6}$ and train for 20,000 iterations. From these networks, we select the model with the best validation loss and evaluate on a test dataset. We use the energy distance loss function from definition \ref{def:energydist} in all cases, but we vary the underlying $d(\cdot,\cdot)$ metric to fit the GAN. For Pareto GAN, we use the $d_{\gamma}(\cdot,\cdot)$ from definition \ref{def:root_euclid}, with $\gamma=2$ on all datasets. This ensured finite loss for all the tail index estimates of all datasets. We used standard Euclidean energy distance for the other GANs. In the lognormal GAN, we follow the common practice of computing the loss function on the log-transformed space \cite{Wiese2019QuantGD}.

%\subsection{Results}

%\begin{wraptable}{l}{1.0\linewidth}
%\centering
%\begin{wraptable}[14]{r}{7.75 cm}
\begin{table}[h!]
\centering
\vspace{-5pt}
\caption{Experimental Results}\label{tab:1}
  \begin{tabular}{ l||c|c||c|c} 
    % \hline
    &\multicolumn{2}{|c||}{Keystrokes} &\multicolumn{2}{|c}{Wiki Traffic}\\ 
     
    %          & Keystrokes  &  & Wiki Traffic & & Livejournal	&\\
    GAN type          & KS & Area     & KS & Area \\
    \hline
    Uniform   & 0.017 & 67.8 & 0.025 & 10.3 \\
    \hline
    Normal    & 0.020 & 59.5 & 0.023 & 8.6  \\
    \hline
    Lognormal & 0.014 & 41.0 & 0.019 & 9.5  \\
    \hline
    Pareto    & \textbf{0.013} & \textbf{21.1} & \textbf{0.017} & \textbf{4.5} \\
    % \hline
    
    \hline
    \hline
&\multicolumn{2}{|c||}{LiveJournal} &\multicolumn{2}{|c}{S\&P500}\\    
    GAN type          & KS & Area     & KS & Area \\
    \hline

Uniform   & \textbf{0.094} & 15.4  & \textbf{0.011} & 12.6\\
    \hline
    Normal    & 0.103 & 7.8 & 0.019 & 7.3\\
    \hline
    Lognormal & 0.111 & 3.1 & 0.014 & 6.5\\
    \hline
    Pareto    & 0.105 & \textbf{2.0} & 0.062 & \textbf{4.4}\\    
  \end{tabular}
%\end{wraptable}
\end{table}

Table \ref{tab:1} compares Pareto GAN to the baseline GANs on the four datasets. In all cases, Pareto GAN provides better tail estimation than other techniques, while generally matching performance on the KS statistic. Figure \ref{fig:2} (top) shows an example of the tail distribution.

Another promising property of the Pareto GAN is that it can learn a more compact representation of heavy tailed datasets than the other models. Other architectures don't naturally produce power-law tails, so they have to use the capacity of the neural network to fit their shallow-tailed distributions into a power-law shape. Since Pareto GAN does produce power-law tails, it can use its neural network capacity to fit the main body of the distribution while still maintaining good tail approximation.

To demonstrate this behavior, we trained neural networks with different layer widths and computed the log-log area metric. Figure \ref{fig:2} (bottom) demonstrates how the Pareto GAN maintains its tail approximation down to width 32, while the other GANs see a sharp drop off in tail accuracies.

%\begin{figure}
%  \centering
%   \includegraphics{loglog_figB.eps}
%  \caption{Log-Log plot of the survival functions for the Keystroke (left), Wiki Traffic (center) and LiveJournal (right) datasets}\label{fig:2}
%\end{figure}

%\begin{wraptable}{r}{12.0cm}
%\caption{Experimental Results}\label{tab:1}
%  \begin{tabular}{ l||c|c||c|c||c|c||c|c } 
    % \hline

%    &\multicolumn{2}{|c||}{Keystrokes} &\multicolumn{2}{|c||}{Wiki Traffic} & \multicolumn{2}{|c||}{LiveJournal} & \multicolumn{2}{|c}{S\&P500} \\ 
     
    %          & Keystrokes  &  & Wiki Traffic & & Livejournal	&\\
%    GAN type          & KS & Area     & KS & Area     & KS & Area     & KS & Area\\
%    \hline
%    Uniform   & 0.017 & 67.8 & 0.025 & 10.3 & \textbf{0.094} & 15.4  & \textbf{0.011} & 12.6\\
%    \hline
%    Normal    & 0.020 & 59.5 & 0.023 & 8.6  & 0.103 & 7.8 & 0.019 & 7.3\\
%    \hline
%    Lognormal & 0.014 & 41.0 & 0.019 & 9.5  & 0.111 & 3.1 & 0.014 & 6.5\\
%    \hline
%    Pareto    & \textbf{0.013} & \textbf{21.1} & \textbf{0.017} & \textbf{4.5}  & 0.105 & \textbf{2.0} & 0.062 & \textbf{4.4}\\
    % \hline
%  \end{tabular}
%\end{wraptable}

\subsection{Approximating Multivariate Distributions}

One of the attributes that makes GANs attractive is that they can learn manifolds embedded in high-dimensional spaces. 
In practice, high-dimension data (e.g. images) does not always span the entire space; instead, they are usually confined to a low dimensional manifold. Learning to align manifolds is a hard problem that precludes the use of some loss functions, such as Jensen-Shannon divergence \cite{arjovsky17}. 
Data with heavy tails further complicates things. 
We show in this section that Pareto GAN is capable of learning distributions with all of these characteristics. 

To apply Pareto GAN to multivariate data, we independently estimate the tail index of each dimension, which scales linearly the number of dimensions. 
We construct Pareto GAN with $\xi=1$ input noise and leverage Corollary \ref{corr:pareto2} once for each dimension, setting $\beta$ to the estimated tail index.
This results in a joint distribution approximately matching the tail indexes of each dimension. 
We then train with root-Euclidean energy distance.  We set $\gamma$ to be the largest estimated tail index \textit{plus one}. This ensures that the expected loss is finite, but still emphasizes the tails sufficiently.

We now define some multi-dimensional distributions with heavy-tailed characteristics and attempt to train GANs to approximate them. First, we define a joint distribution $[X_0, X_1]$ with components defined as follows: 
%\vspace{-15pt}

\begin{equation}
    \begin{split}   
        X_0 &= A + B\\
        X_1 &= sign(A - B)|A-B|^{1/2}
    \end{split}
\end{equation}

where $A$ and $B$ are independent Cauchy RVs. Note that $X_0$ and $X_1$ have different tail indexes (1 and 1/2, respectively) and are not independent.

We trained a Pareto GAN on this distribution. 
The results of this process are shown in Figure \ref{fig:2d}.  
As expected, the marginals match closely, and the joint distributions appear to be close as well.

%\begin{figure*}
%    \centering
%        \includegraphics{}
%    \caption{Log-Log plot of the CCDFs for the Keystroke (upper left), Wiki Traffic (upper right), LiveJournal (lower left), and S\&P 500 (lower right) %datasets}\label{fig:a1}
%\end{figure*}
%
%\begin{figure*}
%    \centering
%        \includegraphics{}
%    \caption{Area vs. layer width for the Keystroke (left), Wiki Traffic (center), LiveJournal (right) datasets}\label{fig:a2}
%\end{figure*}

\begin{figure*}
    \centering
    \includegraphics{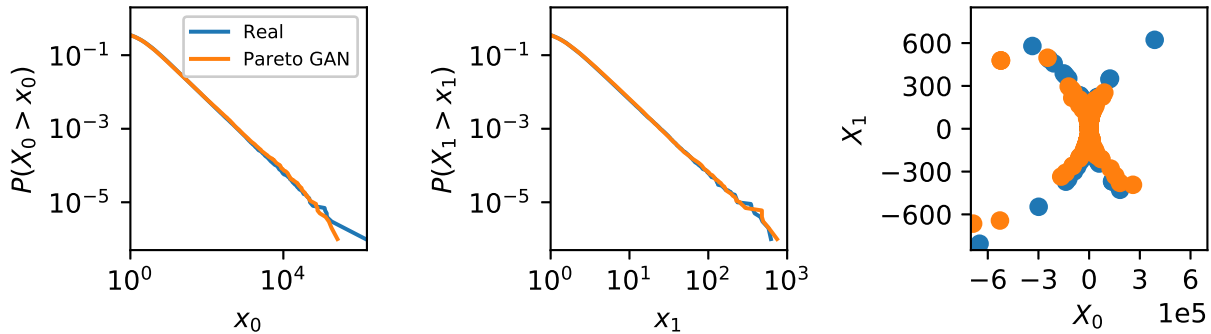}
    \caption{Left/Center: log-log plots of marginal distributions. Right: scatter plot of 1M data samples. }\label{fig:2d}
\end{figure*}

%\begin{figure}
%  \centering
%   \includegraphics{size_fig.eps}
%  \caption{Effects of generator width on log-log area between generated and real survival functions. Results are from (from left to right) the keystroke, Wiki Traffic and LiveJournal datasets}\label{fig:3}
%\end{figure}
%Pareto GAN can be scaled up with linear complexity using this process to arbitrarily many dimensions, in principle matching the tail behavior of all the marginal distributions. The "tail" of a joint distribution is not a well settled concept, and while interesting, we will leave for future work.

Our second multivariate distribution is a high dimensional manifold. 
We define a $d$-dimensional distribution in which all points lie on a $c$ dimension manifold, with $c\ll d$. Furthermore, we give each dimension a different tail index. The random vector is defined:
\begin{equation}\label{eq:highd}
    X = \power(CY, t)\;
\end{equation}

 $Y\in\mathbb{R}^c$ is a $c$-dimension hidden random variable independently drawn from Cauchy distribution, $C\in \mathbb{R}^{d\times c}$ is a constant matrix for transforming the hidden variables to the observable variables, $t\in \mathbb{R}^d$ is a constant vector containing the target tail index, and $\power$ is the element-wise power operation. 
In the following experiments, we set $c=100$, $d=1000$. The elements in $C$ are independently drawn from $\mathcal{N}(0,1)$, and the elements in $t$ are independently drawn from $\uniform([0.5,3])$.
 
We trained four GAN variants on 10k samples from this distribution. 
We trained "normal" and "lognormal" GANs as in previous experiments, and the Pareto GAN as outlined above. 
In order to examine the importance of our proposed loss function, we also trained Pareto GAN with basic energy distance (i.e., with $\gamma=1$).  
This allows the generator to express heavy tailed distributions, but doesn't guarantee that training will converge since the expected loss function is infinite.  
For all GAN variants, we use 200-dimension input noise to the generator.
The generator network consists of 4 fully connected layers with 256 units on each layer. 
The batch size is 256, and the number of training iterations is 200000. 
We use 10000 samples for training, and a disjoint set of 1000000 samples for evaluation.

For each marginal distribution, we computed the area metric from equation \ref{eq:area} between 1M real and generated samples. 
Since the marginals are two-sided, we compute the area metric for both sides and average them.  
We report the average area metric across all dimensions. 

We also examined how well the generator captures the data manifold based on how close samples are to the true manifold.  
This would be very difficult to do with real data, but our synthetic distribution allows us to compute the distance exactly in a warped version of the space. 
We invert the power transform and project generated samples onto the linear manifold represented by $CY$. 
The distance to the linear manifold is

\begin{equation}
MDist = d(\power^{-1}(\hat{x}, t),\power^{-1}(\hat{x}, t)^TP)
\end{equation}

where $P$ is the projection matrix $C(C^TC)^{-1}C^T$, $\hat{x}$ is a generated sample, and $d$ is Euclidean distance.
We report the mean (natural) log MDist to ensure that our metric has finite expectation for all models. 

We ran these experiments with 3 random seeds (arbitrarily chosen 1000, 1001, 1002).  
The seed impacts the choices of $C$ and $t$, as well as network initialization, training, and sampling.
The three seeds produced similar results. 
We report the average of these three trials in Table \ref{tab:2}

%\begin{table}[h!]
%\centering
%\vspace{-5pt}
%\caption{High-dimensional Results}\label{tab:2}
%  \begin{tabular}{ l||c|c }
%    GAN type          & Mean Area & Mean Log MDist \\
%   \hline
%    Normal   & 23.8 & 129  \\
%    \hline
%   Lognormal    & 38.9 & 215  \\
%   \hline
%    Pareto_ED & 9.8 & 221 \\
%    \hline
%    Pareto_Gamma   & \textbf{7.8} & \textbf{26} \\
%    \hline
%  \end{tabular}
%\end{table}

\begin{table}[h!]
\centering
\vspace{-5pt}
\caption{Experimental Results}\label{tab:2}
  \begin{tabular}{ l||c|c} 
    GAN type          & Mean Area & Mean Log MDist \\
   \hline
    Normal  & 132 & 22.1   \\
    \hline
   Lognormal   & 216 & 35.7   \\
   \hline
    Pareto (ED) & 197 & 9.8  \\
    \hline
    Pareto (root-ED)  & \textbf{28}  & \textbf{7.7} \\
  \end{tabular}
%\end{wraptable}
\end{table}
 
The root-ED Pareto GAN clearly performs the best on both metrics. 
It is able to match the tails of the marginals fairly well while producing points close to the manifold. 
Interestingly, the ED Pareto GAN also produces points close to the manifold, but its tail estimation is quite bad. 
We investigated the cause of this by looking at a few marginal distributions and observed that the GAN marginals were often bounded on one side\footnote{See Figure \ref{fig:highdmarg} in appendix}. 
We note that Theorem \ref{theorem:pareto}, does not preclude a Pareto GAN from being bounded. 
Instead, the Pareto GAN is failing to learn two-sided tails when we use an unstable energy distance loss function. 
Replacing this with a stable root-Euclidean energy distance allows learning to be successful.

\section{CONCLUSIONS}

%We have identified 

%We have identified an

%The Pareto GAN 

%A more thorough understanding of the multi-variate case

In this paper, we have identified a specific bias in the application of GANs to open domains, namely the implicit prior of their tail behavior. 
We have also identified shortcomings of some common loss functions when applied to heavy tailed data. 
Our proposed Pareto GAN addresses both of these shortcomings, providing a way for the generator to express heavy tailed distributions and learn such distributions effectively.

\subsubsection*{Acknowledgements}
This research was sponsored by the U.S. Army Combat Capabilities Development Command Army Research Laboratory and was accomplished under Cooperative Agreement Number W911NF-13-2-0045 (ARL Cyber Security CRA). The views and conclusions contained in this document are those of the authors and should not be interpreted as representing the official policies, either expressed or implied, of the Combat Capabilities Development Command Army Research Laboratory or the U.S. Government. The U.S. Government is authorized to reproduce and distribute reprints for Government purposes notwithstanding any copyright notation here on.

%\subsubsection*{References}

\bibliography{bib}{}

\onecolumn
\appendix

\section{DEFINITIONS}

\addtocounter{definition}{-8} % TODO get correct number of definitions out of paper for here.

\begin{definition}\label{def:lipschitz}
    Given metric spaces $(\mathcal{X}, d_\mathcal{X})$ and $(\mathcal{Y}, d_\mathcal{Y})$, we say a function $f: \mathcal{X} \to \mathcal{Y}$ is Lipschitz continuous if and only if there is a constant $k$ where
    \begin{equation}
        \forall x_1, x_2 \in \mathcal{X}, d_\mathcal{Y}(f(x_1),f(x_2)) \leq k d_\mathcal{X}(x_1,x_2)
    \end{equation}
    If the above equation holds for a particular $k$, we say $k$ is a Lipschitz constant for $f$ and that $f$ is $k$-Lipschitz continuous.
\end{definition}

Roughly speaking, a Lipschitz constant is a bound on the slope of $f$. Lipschitz continuous functions are closed under composition, so a network composed of Lipschitz continuous operations is also Lipschitz continuous. The vast majority of common neural network operations meet this criterion, including fully connected and convolutional layers, pooling layers, and activation functions such as sigmoid, tanh, and ReLU.

\begin{definition}\label{def:Wasserstein}
    The Wasserstein distance, $W_1$ between distributions $P$ and $Q$ on a metric space $(A, d)$ is
    \begin{equation}
        W_1(P,Q) = \inf_{\pi \in \Pi(P,Q)} \int_{A \times A} d(x,y) \pi(x,y)
    \end{equation}
    where $\Pi(P,Q)$ is the set of all joint probability distributions on $A$ with marginals $P$ and $Q$.
\end{definition}

The Wasserstein distance uses the distance measure in the underlying space to consider how much mass must be moved what distance in order to deform one distribution into the other.
However, if two distinct distributions do not have a well defined mean (or infinite mean) then it makes sense that the amount of work necessary to deform one into the other can be infinite.

\begin{definition}\label{def:moment}
    We say that a distribution $P$ with PDF $f$ has a finite $n$'th moment on the metric space $(A, d)$ if
    \begin{equation}
        \int_A d(z,z_0)^n f(z) dz < \infty
    \end{equation}
    for some $z_0 \in A$.
\end{definition}

Note that the first moment is the mean, and the second (when the funciton has the mean subtracted away) is the variance. Moments represent information about a function over its whole domain, and the existance and non-existance of moments tends to provide information about how well behaved a function is.

\section{PROOFS}

\addtocounter{theorem}{-3}
\addtocounter{proposition}{-1}
\addtocounter{corollary}{-2}

\subsection{Proof of Proposition \ref{prop:bounded}}

\begin{proposition}
Let $Z_A$ be a random variable in metric space $(\mathcal{Z},d_\mathcal{Z})$. Let $f: \mathcal{Z} \to \mathcal{X}$ be a Lipschitz continuous neural network with respect to metrics $d_\mathcal{Z}$ and $d_\mathcal{X}$. If $Z_A$ lies within ball of radius $c$ centered around $z_0$, $B_c[z_0] \subseteq \mathcal{Z}$, with probability 1, then there exists a ball $B_d[x_0] \subseteq \mathcal{X}$ such that $P(f(Z_A) \in B_d[x_0]) = 1$.
\end{proposition}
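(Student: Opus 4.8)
The plan is to read the conclusion straight off the Lipschitz bound: a $k$-Lipschitz map expands distances by at most a factor $k$, so it must carry a ball of radius $c$ into a ball of radius $kc$. So the work is essentially just choosing the right center and radius and then transferring a deterministic inclusion into a statement about probabilities.

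Concretely, I would first use Definition~\ref{def:lipschitz} to fix a Lipschitz constant $k$ for $f$ with respect to $d_\mathcal{Z}$ and $d_\mathcal{X}$, and then set $x_0 := f(z_0)$ and $d := kc$. The key step is the pointwise containment $f(B_c[z_0]) \subseteq B_d[x_0]$: for any $z$ with $d_\mathcal{Z}(z,z_0) \le c$,
\begin{equation}
    d_\mathcal{X}\big(f(z),x_0\big) = d_\mathcal{X}\big(f(z),f(z_0)\big) \le k\, d_\mathcal{Z}(z,z_0) \le kc = d,
\end{equation}
so $f(z) \in B_d[x_0]$. Note that $B_d[x_0]$ is by definition a subset of the metric space $\mathcal{X}$, so the required inclusion $B_d[x_0] \subseteq \mathcal{X}$ holds automatically; if one prefers an open ball, any radius strictly larger than $kc$ works just as well.

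Finally, I would pass to the probabilistic statement. Since the containment above shows $\{Z_A \in B_c[z_0]\} \subseteq \{f(Z_A) \in B_d[x_0]\}$ as events, and $P(Z_A \in B_c[z_0]) = 1$ by hypothesis, monotonicity of probability gives $P(f(Z_A) \in B_d[x_0]) = 1$. The only point needing a word of justification is that $f(Z_A)$ is a genuine random variable in $\mathcal{X}$, which holds because a Lipschitz function is continuous and hence Borel measurable. There is no substantive obstacle here — the statement is little more than a reformulation of the definition of Lipschitz continuity — so the "hard part" is merely keeping track of the two ball radii and noting the measurability point.
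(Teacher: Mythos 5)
Your proof is correct and follows essentially the same route as the paper's: fix a Lipschitz constant $k$, take $x_0 = f(z_0)$ and radius $kc$, and transfer the pointwise ball containment to the almost-sure statement by monotonicity of probability. Your added remark on measurability of $f(Z_A)$ is a small extra care the paper omits, but the argument is otherwise identical.
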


\begin{proof}\label{proof:bounded}

From the definition of Lipschitz continuity, there must exist some $k$ where
\begin{equation}
\forall z \in \mathcal{Z}, d_\mathcal{X}(f(z_0),f(z)) \le k d_\mathcal{Z}(z_0,z)
\end{equation}
From the definition of a ball, 
\begin{equation}
\forall z \in B_c[z_0], d_\mathcal{Z}(z_0,z) \leq c.
\end{equation}

\begin{equation}
 \begin{split}
  P(Z_A \in &B_c[z_0]) = 1 \\
  & \Rightarrow P(d_\mathcal{Z}(z_0,Z_A) \leq c) = 1\\
  & \Rightarrow P(d_\mathcal{X}(f(z_0),f(Z_A))\leq kc) = 1\\
  & \Rightarrow P(f(Z_A) \in B_{kc}[f(z_0)]) = 1
 \end{split}
\end{equation}

\end{proof}
\subsection{Proof of Theorem \ref{theorem:normal}}
\begin{theorem}
Let $F_u(x)$ be the conditional excess distribution of $X_N$. If $X_N$ is not bounded above, then $F_u(x)$ converges to the normal conditional excess distribution as $u \rightarrow \infty$.
\end{theorem}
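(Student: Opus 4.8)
The plan is to prove the stronger structural fact that the law $F$ of $X_N$ lies in the Gumbel max-domain of attraction $\mathrm{MDA}(\Lambda)$; the convergence of $F_u$ to the (zero tail-index GPD) limit shared by every normal random variable is then exactly the Pickands--Balkema--de Haan equivalence \cite{balkema74} --- i.e. the ``i.e.'' in the statement. Throughout, $\bar\Phi$ denotes the standard normal CCDF.

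\textbf{Step 1: localize the upper tail to the unbounded linear regions.} Since $f_{PWL}$ has finitely many convex linear regions, these regions form a finite polyhedral partition of the input space, and $f_{PWL}$ is bounded on every bounded region. Hence there is a threshold $u_0$ beyond which $\{X_N > u\}$ is carried entirely by the finitely many unbounded regions; on such a region $K$, $f_{PWL}$ is affine, $z\mapsto\langle w_K,z\rangle + c_K$ (Lipschitz by Remark \ref{remark:pwl}), and contributes to arbitrarily large $u$ only when $f_{PWL}$ is unbounded above on $K$. At least one region contributes, precisely because $X_N$ is assumed unbounded above --- with finitely many affine pieces, boundedness above of $f_{PWL}$ is equivalent to no unbounded region having this property. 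Thus for $u>u_0$,
\[
1-F(u)=\sum_{K}\,P\!\bigl(Z_N\in K,\ \langle w_K,Z_N\rangle>u-c_K\bigr),
\]
a finite sum. When the input is one-dimensional each summand is \emph{exactly} $\bar\Phi(a_K u+b_K)$ with $a_K>0$ (using symmetry of $\Phi$ to rewrite the left-tail event coming from the leftmost ray), so Step 1 is elementary in that case. In higher dimensions each summand is the Gaussian mass of a polyhedral cone intersected with a half-space whose bounding hyperplane recedes to Euclidean distance linear in $u$; by the standard Laplace/Mills estimate for Gaussian measures of such sets this is $(\text{polynomial in }u)\cdot\exp\!\bigl(-\tfrac12(\kappa_K u+\lambda_K)^2(1+o(1))\bigr)$ with $\kappa_K>0$, a von Mises / Gumbel-type tail.

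\textbf{Steps 2--3: collapse the mixture and verify Gumbel membership.} Comparing summands via Mills' ratio $\bar\Phi(x)\sim\phi(x)/x$, the term(s) with the smallest rate $\kappa_*:=\min_K\kappa_K$ dominate, so $1-F(u)$ is tail-equivalent to $c\,u^{p}\exp\!\bigl(-\tfrac12(\kappa_* u+\lambda_*)^2(1+o(1))\bigr)$ for constants $c>0$, exponent $p$, and $\lambda_*$; several regions tying for $\kappa_*$ only alters $c$ and $p$. This is exactly the ``mixture of Gaussians behaving asymptotically like a single Gaussian'' picture sketched in the paper. A tail of this form is a von Mises function; equivalently one checks the Pickands condition directly: with auxiliary function $\beta(u)\sim 1/(\kappa_*^2 u)$ the polynomial and constant prefactors cancel in the ratio $(1-F(u+x\beta(u)))/(1-F(u))$ and the quadratic exponent contributes $\exp\!\bigl(-\tfrac12[(\kappa_*(u+x\beta(u)))^2-(\kappa_* u)^2]\bigr)\to e^{-x}$ for every $x$, exactly as for the standard normal with its auxiliary function $1/v$. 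Hence $F\in\mathrm{MDA}(\Lambda)$, and the Pickands--Balkema--de Haan theorem gives $\sup_{x}\bigl|\,F_u(x)-(1-S(x;0,\beta(u)))\,\bigr|\to 0$; that is, $F_u$ converges (under the same normalization $\beta(u)$) to the zero tail-index generalized Pareto distribution of Definition \ref{def:gpd}, which is the common limiting conditional excess distribution of normal random variables.

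\textbf{Main obstacle.} The delicate step is the multivariate Gaussian tail estimate in Step 1: estimating the Gaussian mass of a polyhedral cone intersected with a receding half-space, in particular the case where the nearest point of the half-space lies outside the cone so that the exponential rate is governed by a boundary facet rather than by $w_K$ itself, together with the bookkeeping across region boundaries and across regions tying for the minimal rate $\kappa_*$. In one dimension this difficulty disappears entirely --- the decomposition is an exact finite mixture of scaled-and-shifted standard normal tails --- so a clean fallback, in line with the paper's reduction of a network to univariate marginal maps, is to present the argument in full for one-dimensional input and merely indicate the cone estimate in general.
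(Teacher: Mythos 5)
Your proof is correct on the setting the paper actually treats, but it reaches the conclusion by a different route. The paper takes $f_{PWL}$ to be univariate (as stated just after the definition of $X_N$), notes that beyond some $\pm c$ the function is affine on each ray, writes the conditional excess $F_u$ \emph{exactly} as a mixture of at most two normal conditional excess distributions, and argues directly that as $u\to\infty$ the component with the larger weight (or larger bias on a tie) dominates. Your Step 1 in one dimension is precisely this decomposition, but instead of arguing about domination of mixture components at the level of $F_u$, you establish tail equivalence of $1-F$ to a single von Mises (normal-type) tail and then route the conclusion through membership in $\mathrm{MDA}(\Lambda)$ and the Pickands--Balkema--de Haan theorem. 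This is the more standard extreme-value-theoretic path, and it actually makes rigorous the step the paper leaves informal (``dominated by the component with the larger weight''), since tail equivalence up to a constant is exactly what preserves the domain of attraction and the limiting excess distribution; it also makes the normalization $\beta(u)$ explicit, which the paper's statement suppresses. Your attempted extension to multivariate Gaussian input goes beyond the paper --- the paper avoids this entirely by restricting to univariate $f_{PWL}$ --- and, as you correctly flag, the cone estimate with a $(1+o(1))$ factor inside the exponent is too weak as written to verify the von Mises condition (an $o(u^2)$ error in the exponent can swamp the $O(u)$ increments that the Pickands ratio tests), so that part remains a sketch; your one-dimensional fallback is the right scope and matches what the paper proves.
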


\begin{proof} 
    Let $k$ be a Lipschitz constant of $f_{PWL}$. Let $\hat{f}(z) = f_{PWL}(z) - f_{PWL}(0)$. Note that subtracting a bias does not change the Lipschitz constants of a function, and that $\hat{f}(z)$ still meets definition \ref{def:pwl}. It suffices to show that

    From definition \ref{def:lipschitz} it is clear that for any $c$,
    \begin{equation}
        ||\hat{f}(z)|| > c \Rightarrow k ||z|| > c.
    \end{equation}

    Since $\hat{f}$ has a finite number of convex linear regions, there exist positive number $c$ and real values $w_1, w_2, b_1, b_2$ such that $\hat{f}(z)=w_1z+b_1$ for $z>c$ and $\hat{f}(z)=w_2z+b_2$ for $z<-c$. In this case, the conditional excess distribution of $\hat{f}(Z)$ is

    \begin{equation}
        \begin{split}
            F_u(x) = &[w_1>0] \frac{P\left(Z\le \frac{x+u-b_1}{w_1}\right)   -  P\left(Z\le \frac{u-b_1}{w_1}\right)     }{P\left(Z> \frac{u-b_1}{w_1}\right)} \\
            &+ [w_2<0] \frac{P\left(Z\le \frac{x+u-b_2}{w_2}\right) - P\left(Z\le \frac{u-b_2}{w_2}\right)}{P\left(Z> \frac{u-b_2}{w_2}\right)}
        \end{split}
    \end{equation}

    \begin{equation}
        \begin{split}
            F_u(x) = &[w_1>0]\frac{\Phi\left(\frac{x+u-b_1}{w_1}\right) - \Phi\left(\frac{u-b_1}{w_1}\right)}{1-\Phi\left(\frac{u-b_1}{w_1}\right)} \\
            &+ [w_2<0]\frac{\Phi\left(\frac{x+u-b_2}{w_2}\right) - \Phi\left(\frac{u-b_2}{w_2}\right)}{1-\Phi\left(\frac{u-b_2}{w_2}\right)}
        \end{split}
    \end{equation}

    %\begin{equation}
    %F_u(x) = [w_1>0]\frac{\Phi\left(\frac{x+u-b_1}{w_1}\right) - \Phi\left(\frac{u-b_1}{w_1}\right)}{1-\Phi\left(\frac{u-b_1}{w_1}\right)} + [w_2<0]\frac{\Phi\left(\frac{x+u-b_2}{w_2}\right) - \Phi\left(\frac{u-b_2}{w_2}\right)}{1-\Phi\left(\frac{u-b_2}{w_2}\right)}
    %\end{equation}

    %\begin{equation}
    %\begin{split}
    %F_u(x) &= [w_1>0] \frac{P\left(Z\le \frac{x+u-b_1}{w_1}\right)}{P\left(Z> \frac{u-b_1}{w_1}\right)} + [w_2<0] \frac{P\left(Z\le \frac{x+u-b_2}{w_2}\right)}{P\left(Z> \frac{u-b_2}{w_2}\right)}\\
    %&= [w_1>0]\frac{\Phi\left(\frac{x+u-b_1}{w_1}\right)}{1-\Phi\left(\frac{u-b_1}{w_1}\right)} + [w_2<0]\frac{\Phi\left(\frac{x+u-b_2}{w_2}\right)}{1-\Phi\left(\frac{u-b_2}{w_2}\right)}\\
    %&= [w_1>0]\frac{\Phi\left(\frac{x}{w_1}\right) - \Phi\left(\frac{u-b_1}{w_1}\right)}{1-\Phi\left(\frac{u-b_1}{w_1}\right)} + [w_2<0]\frac{\Phi\left(\frac{x}{w_2}\right) - \Phi\left(\frac{u-b_2}{w_2}\right)}{1-\Phi\left(\frac{u-b_2}{w_2}\right)}
    %\end{split}
    %\end{equation}

    where $\Phi$ is the normal CDF and square brackets evaluate to one if the condition is met and zero otherwise. Now consider the conditions $w_1>0$ and $w_2<0$. If both are false, then the distribution is bounded. If one of the conditions is true, then $F_u(x)$ is exactly the conditional excess distribution of a normal random variable. If both conditions are true, $F_u(x)$ has the tail of a Gaussian mixture. As $u \rightarrow \infty$, $F_u(x)$ is dominated by the component with the larger weight (or the larger bias if the weight are identical), thus converging to a normal conditional excess distribution.
\end{proof}

\subsection{Proof of Theorem \ref{theorem:pareto}}
\begin{theorem}
    Let $F_u(x)$ be the conditional excess distribution of $X_{\xi} = f_{PWL}(Z_{\xi})$. If $X_{\xi}$ is not bounded above, then $F_u(x)$ converges to $S(x;\xi,\sigma)$ for some $\sigma \in \mathbb{R}$. 
\end{theorem}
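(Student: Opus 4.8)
The plan is to follow the proof of Theorem~\ref{theorem:normal}, exploiting one simplification unavailable there. We may assume $\xi \ge 0$: for $\xi < 0$ the random variable $Z_\xi = (U^{-\xi}-1)/\xi$ (with $U \sim \uniform(0,1)$) is bounded, hence so is $X_\xi$, and the hypothesis is vacuous. For $\xi \ge 0$, $Z_\xi$ is nonnegative with probability one, so it is supported on $\mathbb{R}_+$; consequently only the behaviour of $f_{PWL}$ on the right tail of its domain is relevant to the upper tail of $X_\xi$, and the Gaussian-mixture bookkeeping of Theorem~\ref{theorem:normal} is not needed here. First I would invoke Definition~\ref{def:pwl}: $f_{PWL}$ has finitely many linear regions, so there exist a threshold $c \ge 0$ and reals $w_1, b_1$ with $f_{PWL}(z) = w_1 z + b_1$ for all $z \ge c$, while the (finitely many) linear regions meeting the compact set $[0,c]$ give $f_{PWL}$ a bounded image there; write $M := \sup_{z \in [0,c]} f_{PWL}(z) < \infty$. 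If $w_1 \le 0$, then $f_{PWL}$ is bounded above on $[0,\infty)$, so $X_\xi$ would be bounded above, contradicting the hypothesis; hence $w_1 > 0$.

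Next, fix any $u$ large enough that $u > M$ and $(u - b_1)/w_1 > c$. Then for every $x \ge u$, the only $z \ge 0$ with $f_{PWL}(z) > x$ lie in the final region $[c,\infty)$, where $f_{PWL}(z) > x \iff z > (x-b_1)/w_1$; hence $\{X_\xi > x\} = \{Z_\xi > (x-b_1)/w_1\}$. Substituting the CCDF of $Z_\xi$, namely $P(Z_\xi > z) = (1+\xi z)^{-1/\xi}$ for $\xi \ne 0$ and $e^{-z}$ for $\xi = 0$, gives $\bar F_{X_\xi}(x) = S(x - b_1;\, \xi,\, w_1)$ for all $x \ge u$: the upper tail of $X_\xi$ is exactly a location-scale transform of a GPD with the target tail index $\xi$.

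Finally I would compute the conditional excess directly, using the elementary self-similarity of the GPD under thresholding. For $\xi \ne 0$ and $y \ge 0$,
\[
1 - F_u(y) \;=\; \frac{\bar F_{X_\xi}(u+y)}{\bar F_{X_\xi}(u)} \;=\; \frac{S(u+y-b_1;\,\xi,\,w_1)}{S(u-b_1;\,\xi,\,w_1)} \;=\; \Bigl(1 + \frac{\xi y}{w_1 + \xi(u-b_1)}\Bigr)^{-1/\xi} \;=\; S(y;\,\xi,\,\sigma_u),
\]
with $\sigma_u := w_1 + \xi(u-b_1)$, and similarly $1 - F_u(y) = e^{-y/w_1} = S(y;\,0,\,w_1)$ when $\xi = 0$. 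Thus, for all sufficiently large $u$, $F_u$ coincides \emph{exactly} with the conditional excess distribution of a generalized Pareto distribution of index $\xi$ (with a threshold-dependent scale $\sigma_u$ when $\xi > 0$, which collapses to the constant $w_1$ when $\xi = 0$). This is the claimed conclusion, read in the same sense in which Theorem~\ref{theorem:normal} asserts that $F_u$ ``converges to the normal conditional excess distribution'': $F$ lies in the $\xi$-domain of attraction.

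I expect the main obstacle to be organizational rather than conceptual: carefully justifying that for large $u$ no bounded-image linear region of $f_{PWL}$ on $[0,\infty)$ contributes to $\{X_\xi > u\}$ (so the tail genuinely reduces to a single affine piece), confirming $w_1 > 0$ from the unboundedness hypothesis, and stating the conclusion relative to the threshold-dependent scale $\sigma_u$ exactly as Theorem~\ref{theorem:normal} does. Everything else reduces to the one-line GPD identity above. Corollary~\ref{corr:pareto2} then follows by composing this result with the signed power map, since $z \mapsto sign(z)\,|z|^{\beta}$ sends a GPD($\xi$) upper tail to a GPD($\xi\beta$) upper tail.
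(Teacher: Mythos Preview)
Your proof is correct and takes a more direct route than the paper's. The paper argues via the asymptotic moment characterization of the GPD domain of attraction (Theorem~8(a) of \cite{balkema74}): it observes that scaling, shifting, and behaviour on a bounded region do not change which asymptotic moments $m_\gamma(\cdot)$ are finite, so $X_\xi$ inherits exactly the finite asymptotic moments of $Z_\xi$ and hence lies in the $\xi$-domain of attraction. You instead compute the conditional excess CDF explicitly, exploiting the threshold-stability of the GPD family. This buys you two things: the argument is self-contained (no appeal to \cite{balkema74}), and it is sharper---you show that beyond a finite threshold $F_u$ is \emph{exactly} a GPD of index $\xi$, not merely asymptotically so. You also correctly observe that $Z_\xi \ge 0$ almost surely, so only the right-hand linear region of $f_{PWL}$ matters; the paper retains the two-sided $(w_1,w_2)$ bookkeeping carried over from Theorem~\ref{theorem:normal}, which is unnecessary here. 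The paper's moment approach is somewhat more robust to generalization (for example, to inputs that are genuine mixtures of several affine pieces in the tail), but for the theorem as stated your argument is cleaner.
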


\begin{proof} 
    The proof largely follows from the the proof for Theorem \ref{theorem:normal}. Let $\hat{f}(z) = f_{PWL}(z) - f_{PWL}(0)$. There exist positive number $c$ and real values $w_1, w_2, b_1, b_2$ such that $f_{PWL}(z)=w_1z+b_1$ for $z>c$ and $f_{PWL}(z)=w_2z+b_2$ for $z<-c$. As in Theorem \ref{theorem:normal}, if $w_1\leq0$ and $w_2\leq0$ then $X_{\xi}$ is bounded above.

    \begin{definition}\label{def:amom}
        Define the asymptotic moments of a random variable
        \begin{equation}
            m_{\gamma}(X)=\lim_{t \rightarrow \infty} E\left[ \left(\frac{X}{t}\right)^{\gamma} \Big\vert X > t \right].
        \end{equation}
    \end{definition}

    From Theorem 8(a) in \cite{balkema74}, it suffices to show that for $\gamma>0$

    \begin{equation}
        \gamma<\frac{1}{\xi} \iff m_{\gamma}(X_{\xi}) \text{ exists and is finite.}
    \end{equation}

    The behavior of a random variable over a finite region (e.g., $Z_{\xi}<c$) does not affect which asymptotic moments are finite, nor does scaling and shifting. For a mixture of random variables, an asymptotic moment $m_{\alpha}(X)$ is finite if and only if $m_{\gamma}(X_i)$ is finite for each constituent variable $X_i$. 

    For $Z_{\xi}>c$, $X_{\xi}$ is a mixture of scaled and shifted copies of $Z_{\xi}$. Therefore $X_{\xi}$ has the same finite moments as $Z_{\xi}$, and therefore its conditional excess distribution converges to $S(x;\xi, 1)$.

\end{proof}

\subsection{Proof of Corollary \ref{corr:pareto2}}
\begin{corollary}
    Let $X_{\alpha}$ be a Pareto GAN generator with tail index $\alpha$. Let 

    \begin{equation}
        X_{\beta} = sign(X_{\alpha}) |X_{\alpha}|^{\beta}, \beta>0
    \end{equation}

    Let $F_u(x)$ be the conditional excess distribution of $X_{\beta}$. If $X_{\beta}$ is not bounded above, then $F_u(x)$ converges to $S(x;\alpha\beta,\sigma)$ for some $\sigma \in \mathbb{R}$. 
\end{corollary}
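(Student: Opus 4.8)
The plan is to reduce the statement to the asymptotic-moment criterion already exploited in the proof of Theorem~\ref{theorem:pareto}. By Theorem~8(a) of \cite{balkema74}, in order to show that the conditional excess distribution $F_u(x)$ of $X_\beta$ converges to $S(x;\alpha\beta,\sigma)$ for some $\sigma\in\mathbb{R}$, it suffices to show that the asymptotic moments of Definition~\ref{def:amom} satisfy, for every $\gamma>0$,
\[
\gamma < \frac{1}{\alpha\beta} \iff m_{\gamma}(X_\beta)\text{ exists and is finite.}
\]
Since Theorem~\ref{theorem:pareto} already tells us that $X_\alpha$ has exactly the same finite asymptotic moments as $Z_\alpha$ --- that is, $m_\delta(X_\alpha)$ is finite iff $\delta<1/\alpha$ --- the whole argument comes down to relating $m_\gamma(X_\beta)$ to $m_{\beta\gamma}(X_\alpha)$.

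First I would note that $g(x)=\mathrm{sign}(x)\,|x|^{\beta}$ is a strictly increasing homeomorphism of $\mathbb{R}$ for $\beta>0$, so $X_\beta=g(X_\alpha)$ is not bounded above precisely when $X_\alpha$ is not bounded above; in that case $P(X_\beta>t)=P(X_\alpha>t^{1/\beta})>0$ for all $t>0$, so the conditional excess distribution is well defined at every threshold, and on the relevant positive tail $\{X_\beta>t\}=\{X_\alpha>t^{1/\beta}\}$. Then the substitution $s=t^{1/\beta}$ (with $t\to\infty$ iff $s\to\infty$) gives
\[
m_{\gamma}(X_\beta)=\lim_{t\to\infty}\mathbb{E}\!\left[\left(\tfrac{X_\beta}{t}\right)^{\gamma}\,\Big|\,X_\beta>t\right]
=\lim_{s\to\infty}\mathbb{E}\!\left[\left(\tfrac{X_\alpha}{s}\right)^{\beta\gamma}\,\Big|\,X_\alpha>s\right]
=m_{\beta\gamma}(X_\alpha),
\]
in the sense that one side exists and is finite iff the other does. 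Combining with Theorem~\ref{theorem:pareto}, $m_\gamma(X_\beta)$ is finite iff $\beta\gamma<1/\alpha$, i.e.\ iff $\gamma<1/(\alpha\beta)$, which is exactly the criterion above; hence $F_u(x)$ converges to $S(x;\alpha\beta,\sigma)$ for the $\sigma$ produced by Theorem~8(a) of \cite{balkema74}.

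The main obstacle is essentially bookkeeping rather than a new idea: I must be careful to use the identity $\{X_\beta>t\}=\{X_\alpha>t^{1/\beta}\}$ only on the positive tail, where $g$ really is the power map $x\mapsto x^{\beta}$ (which is all that matters as $u\to\infty$), and to justify that the change of variables $s=t^{1/\beta}$ transports existence and finiteness of the limit in both directions. Everything else --- the reduction via Balkema--de Haan, the mixture/scaling invariance of asymptotic moments, and the finiteness threshold for $X_\alpha$ --- is inherited verbatim from the proof of Theorem~\ref{theorem:pareto}.
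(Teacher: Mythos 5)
Your proposal is correct and follows essentially the same route as the paper: reduce to the asymptotic-moment criterion of Theorem~8(a) of \cite{balkema74} and show $m_{\gamma}(X_{\beta})=m_{\beta\gamma}(X_{\alpha})$, which is finite iff $\gamma<1/(\alpha\beta)$. If anything, your explicit substitution $s=t^{1/\beta}$ is slightly more careful than the paper's chain of equalities, which keeps the same $t$ across steps and implicitly relies on this reparametrization.
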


\begin{proof}
    Considering the right tail, we can ignore the negative case and simply use $X_{\alpha}^{\beta}$. From theorem \ref{theorem:pareto}, $X_{\alpha}$ is bounded or converges to a GPD. In the bounded case, raising $X_{\alpha}$ to $\beta$ still produces a bounded variable. As in Theorem \ref{theorem:pareto}, in the unbounded case  it suffices to show that for $\gamma>0$

    \begin{equation}
        \gamma<\frac{1}{\alpha\beta} \iff m_{\gamma}(X_{\beta}) \text{ exists and is finite.}
    \end{equation}

    %\begin{equation}
    %\gamma<\alpha\beta \iff m_{\gamma}(X_{\beta}) \text{ exists and is finite.}
    %\end{equation}
    From Definition \ref{def:amom},

    \begin{equation}
        \begin{split}
            m_{\gamma}(X_{\beta})= & \lim_{t \rightarrow \infty} E\left[ \left(\frac{X_{\beta}}{t}\right)^{\gamma} \Big\vert X_{\beta} > t \right]\\
            = & \lim_{t \rightarrow \infty} E\left[ \left(\frac{X_{\alpha}^{\beta}}{t}\right)^{\gamma} \Big\vert X_{\alpha}^{\beta} > t \right]\\
            = & \lim_{t \rightarrow \infty} E\left[ \left(\frac{X_{\alpha}}{t}\right)^{\gamma\beta} \Big\vert X_{\alpha} > t \right]\\
            = & m_{\gamma\beta}(X_{\alpha})
        \end{split}
    \end{equation}

    Therefore, $m_{\gamma}(X_{\beta})$ is finite if and only if $m_{\gamma\beta}(X_{\alpha})$ is finite, which is true if and only if $\gamma<\frac{1}{\alpha\beta}$.

    %Therefore, for $\beta \geq 0$, $m_{\gamma\beta}(X_{\beta})$ is finite if and only if $m_{\gamma}(X_{\alpha})$ is finite, which is true if and only if $\gamma<\frac{1}{\alpha}$.  

    %\gamma<\frac{1}{\alpha\beta} \iff m_{\gamma}(X_{\beta}) \text{ exists and is finite.}

    %From theorem \ref{theorem:pareto}, if $X_{\alpha}$ if unbounded  

    %\begin{equation}
    %\gamma<\alpha \iff m_{\gamma}(X_{\alpha}) \text{ exists and is finite.}
    %\end{equation} 

    %\begin{equation}
    %m_{\gamma}(X_{\beta})=\lim_{t \rightarrow \infty} E\left[ \left(\frac{X_{\alpha}^{\beta}}{t}\right)^{\gamma} \Big\vert X > t \right].
    %\end{equation}

    %\begin{equation}
    %S(x;\alpha, \sigma) = (1 + \alpha x)^{-1 / \alpha} \, \alpha>0
    %\end{equation}

    %$X_{\alpha}^{\beta}$

    %In the unbounded case, the  conditional excess of $X_{\beta}$ converges to $(1 + \alpha x^{\alpha / \xi})^{-1 / \alpha}$.

    %For a large x, $1 + \alpha x^{\alpha / \xi} \rightarrow \alpha x^{\alpha / \xi}$

    %Therefore, 
    %\begin{equation}
    %(1 + \alpha x^{\alpha / \xi})^{-1 / \alpha} \rightarrow (\alpha x^{\alpha / \xi})^{-1 / \alpha} = \alpha^{-1 / \alpha} x^{-1 / \xi}
    %\end{equation}

\end{proof}

\section{ADDITIONAL RESULTS}

Figures \ref{fig:a1} and \ref{fig:a2} show additional results from the experiments run in section \ref{sec:exp}. Figure \ref{fig:a1} shows the tail approximation of the different GANs on each of the four datasets.  
Figure \ref{fig:a2} shows how the size of the neural network affects its ability to model the tails of the data on three different datasets.  Pareto GAN can accurately model the tails with very small networks, while the other generators need more significant network capacity to do so.

\begin{figure*}
    \centering
        \includegraphics{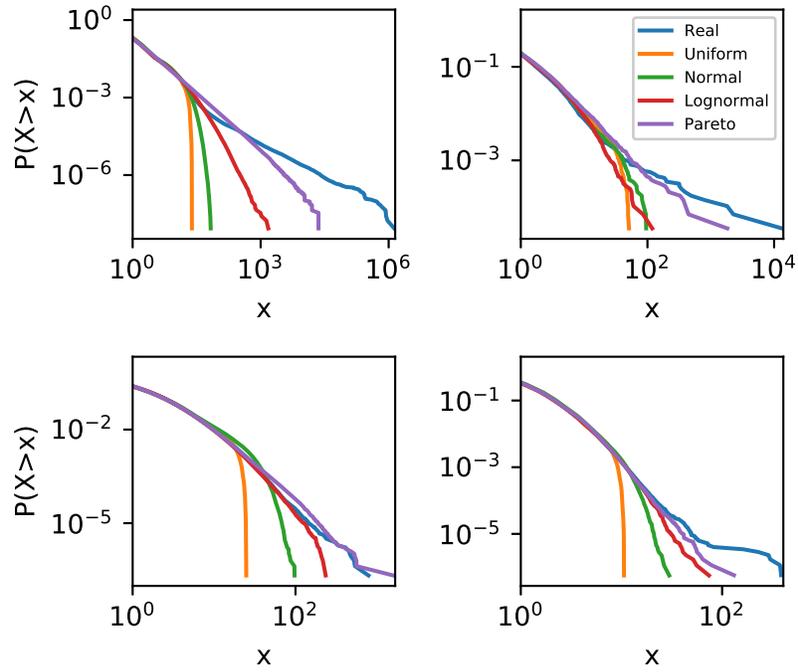}
    \caption{Log-Log plot of the CCDFs for the Keystroke (upper left), Wiki Traffic (upper right), LiveJournal (lower left), and S\&P 500 (lower right) datasets}\label{fig:a1}
\end{figure*}

\begin{figure*}
    \centering
        \includegraphics{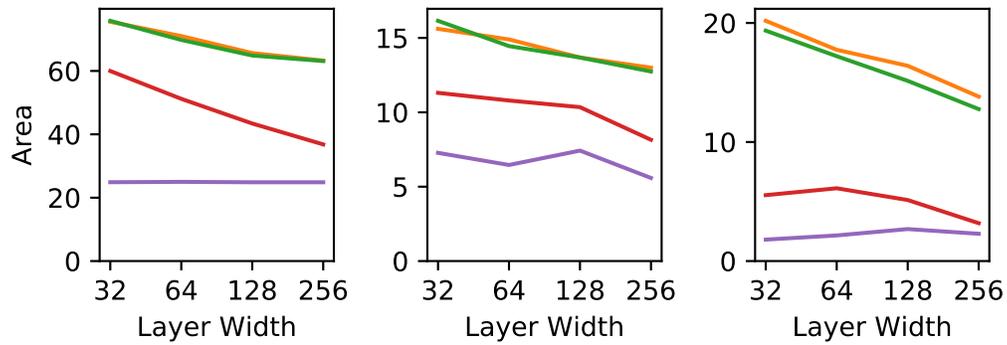}
    \caption{Area vs. layer width for the Keystroke (left), Wiki Traffic (center), LiveJournal (right) datasets}\label{fig:a2}
\end{figure*}

%\begin{figure*}
%    \centering
%    \includegraphics{fig2D.eps}
%    \caption{Left/Center: log-log plots of marginal distributions. Right: scatter plot of 1M data samples. }\label{fig:2d}
%\end{figure*}

Figure \ref{fig:highdmarg} shows an example of the positive and negative sides of the first marginal (X0) of the 1000-dimensional distribution defined in equation \ref{eq:highd} and used to produce the results in Table \ref{tab:2}. This plot is from seed 1000. The Pareto GAN trained with ED learns a one-sided (all negative) marginal distribution, even though the tail index estimate is fairly good. Training with root-Euclidean ED allows for successful, stable optimization.

\begin{figure*}
    \centering
        \includegraphics{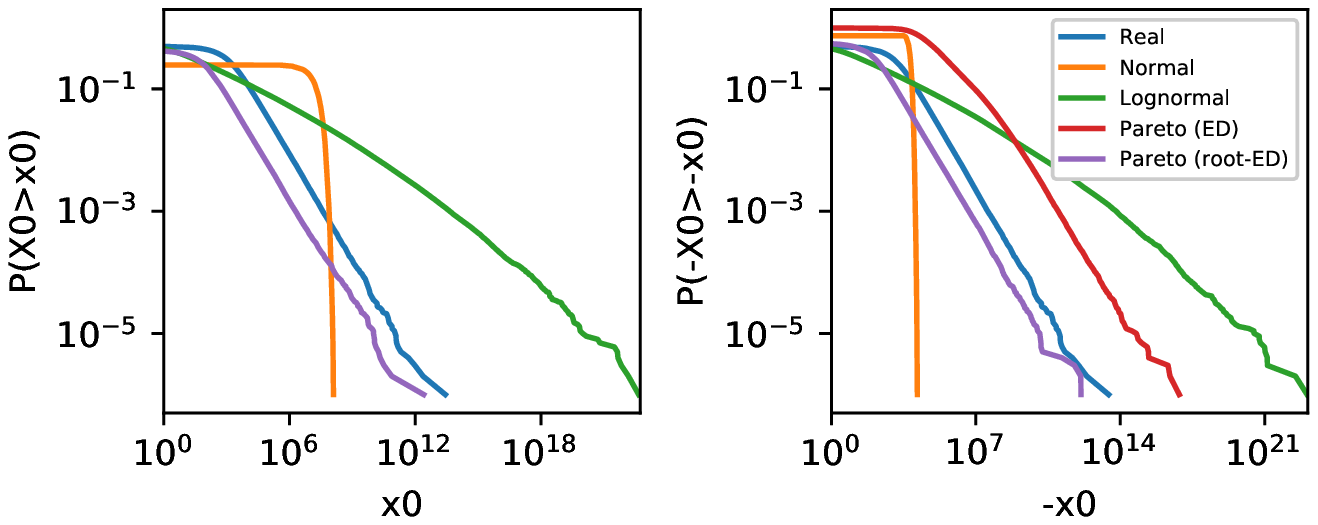}
    \caption{Log-log plots of the positive (left) and negative (right) tails of the first marginal of the real 1000-dimensional distribution and the different GANs.}\label{fig:highdmarg}
\end{figure*}

\end{document}